\newtheorem{theorem-new}{Theorem}
\def\var{\mathop{\rm var}\nolimits}%
\def\Var{\mathop{\rm var}\nolimits}%
\newcommand{\Cc}{\mathcal{C}}
\newcommand{\Nc}{\mathcal{N}}
\newcommand{\Sc}{\mathcal{S}}
\newcommand{\Tc}{\mathcal{T}}
\newcommand{\Xc}{\mathcal{X}}
\newcommand{\Yc}{\mathcal{Y}}
\newcommand{\wv}{{\bf w}}
\newcommand{\xv}{{\bf x}}
\newcommand{\rv}{{\bf r}}
\newcommand{\yv}{{\bf y}}
\newcommand{\zv}{{\bf z}}
\DeclareMathOperator\E{E}
\let\P\relax
\DeclareMathOperator\P{P}
\newcommand{\N}{\mathrm{N}}
\def\textiid{i.i.d.\@\xspace}
\newcommand\iid{\ifmmode\text{ i.i.d. } \else \textiid \fi}
\newcommand{\Real}{\mathbb{R}}
\newcommand{\norm}[1]{\|#1\|}
\newcommand{\normsq}[1]{\norm{#1}^2}
\newcommand{\mvec}[1]{{\boldsymbol #1}}
\newcommand{\ex}{{\rm e}}
\newcommand{\av}{\mvec{a}}
\newtheorem{assumption}{Assumption}
\DeclareMathOperator*{\argmax}{arg\,max}
\title[Efficient Deep Learning of GMMs]{Efficient Deep Learning of GMMs}
\begin{document}
\maketitle

\begin{abstract}
%

We show that a collection of Gaussian mixture models (GMMs) in $\Real^{n}$ can be optimally classified using $O(n)$ neurons in a neural network with two hidden layers (deep neural network), whereas in contrast, a neural network  with a single hidden layer (shallow neural network)
would require at least $O(\exp(n))$ neurons or possibly exponentially large coefficients. Given the universality of  the Gaussian distribution in the feature spaces of data, e.g., in speech, image and text, 
our result  sheds light on the observed efficiency of deep neural  networks in practical classification problems.  
\end{abstract}


\vspace{10mm}
\section{Introduction} 
There is a rapidly growing literature which demonstrates the effectiveness of deep 
neural networks in classification problems that arise in practice; e.g., for classification 
of audio, image and text data sets.  The universal approximation 
theorem, UAT (\cite{cybenko1989approximations,funahashi1989approximate}),
states that any regular function, which 
for example, separates in the (high dimensional) feature space 
a collection of points corresponding to images of dogs from those of cats, 
can be approximated by a neural network.  But UAT is proven for shallow, i.e., single hidden-layer,
neural networks and in fact the number of neurons needed may be exponentially or super 
exponentially large in the size of the feature space of the data. 
Yet, practical deep neural networks are able to solve such 
classification problems effectively and efficiently, i.e., using what amounts to a small number 
of neurons in terms of the size of the feature space of the data.  There is no theory 
yet as to why deep neural networks (DNNs from here on) are as effective and efficient 
in practice as they evidently are.  
There are essentially two possibilities for this observed outcome: 1) DNNs are 
\textit{always} significantly more efficient in terms of the number of neurons used for approximation 
of \textit{any} relevant functions than shallow networks, or 2) discriminant functions that arise 
\textit{in practice}, e.g., those that separate in the feature space of images points representing 
dogs from points representing cats, are particularly suited for DNNs.  If the latter proposition is true,
then the observed efficiency of DNNs is essentially due to the special form of the 
discriminant functions encountered in practice and not to the universal efficiency of DNNs,
which the former proposition would imply.

The first alternative proposed above is a general question about function 
approximation given 
neural networks as the collection of basis functions.  We are not aware of 
general 
results that show DNNs (those with two or more hidden layers) require
fundamentally fewer neurons for approximation of general functions than shallow neural networks 
(or SNNs from here on, i.e., those with a single hidden layer).  
In this paper, we focus on the second alternative and provide an answer in the affirmative; 
that indeed many discriminant functions that arise in practice are such that DNNs
require significantly, e.g., logarithmically, fewer neurons for their approximation than SNNs. To
formalize what may constitute discriminant functions that arise in practice, we focus on a versatile
class of distributions often used to model real-life distributions, namely Gaussian mixture model
(GMM for short), see Figure \ref{fig:3GMMs}. GMMs have been shown to be good models for audio, 
speech, image and text
processing in the past decades, e.g., see
\cite{gauvain1994maximum,reynolds2000speaker,portilla2003image,zivkovic2004improved,indurkhya2010handbook}.

\subsection{Background} 
The universal approximation theorem, see, for example, \cite{cybenko1989approximations,funahashi1989approximate,hornik1989multilayer,barron1994approximation}, teaches us that shallow neural networks 
(SNNs) can approximate regular functions to 
any required accuracy, albeit potentially with an exponentially large number of neurons.  
Can this number be reduced significantly, e.g., logarithmically, by deep neural networks?  
As indicated above, there is no such result as of yet and there is scant literature that 
even discusses this question.  Some evidence exists that DNNs may in fact not be efficient
in general, see \cite{abbe2018provable}.  On the other hand, some specialized functions have been 
constructed for which DNNs achieve significant and even logarithmic reduction in the
number of neurons compared to SNNs, 
e.g., see \cite{eldan2016power,rolnick2018the} for a certain radial 
function and polynomials, respectively. 
However, the functions considered in these references are 
typically very special and have little demonstrated basis in practice.  Perhaps the most 
illustrative cases are the high degree polynomials discussed in \cite{rolnick2018the} 
but the impressive logarithmic reduction in the number of neurons due to depth of the DNNs 
demonstrated in this work occurs only for very high degrees of 
polynomials in the feature coordinate size.

In this work we are motivated by model universality considerations.  What models of data are 
typical and what resulting discriminant functions do we typically need to approximate in 
practice? 
With a plausible model, we can determine if the resulting 
discriminant function(s) can be
approximated efficiently by deep networks.
To this end, we focus on 
data with Gaussian feature distributions, which provide a plausible and practical model for many 
types of data, especially when the feature space is sufficiently concentrated, e.g. after a
number of projections to lower-dimensional spaces, e.g., see \cite{bingham2001random}.
Our overall framework is based on the following set of definitions and demonstrations that 
we describe in detail:
\begin{itemize}
\item Definition of and notation for an $L$-deep neural network 
essentially consisting of a set of $L \geq 1$
affine transformations each with tunable coefficients and an additive translation (or bias), alternated 
with non-linear functions acting point-wise on each coordinate
\item A collection of (high-dimensional) GMMs, each consisting of a set of Gaussian distributions 
in dimension $n$ with arbitrary means and covariance matrices 
\item Definition of and notation for the (high-dimensional) classifier function for the above GMMs,
which is readily seen to be the maximum of multiple discriminant functions each
consisting of sums of exponentials of 
quadratic functions in dimension $n$
\item Definitions needed to link level of approximation of a set of discriminant functions with the performance of the corresponding classifier
\item Demonstration that DNNs can approximate general $n$-dimensional GMM discriminant functions 
using $O(n)$ neurons
\item Demonstration that SNNs need either an exponential (in $n$) number of 
neurons 
and/or exponentially large coefficients to approximate 
GMM discriminant functions
\end{itemize}

\subsection{Notations}
Throughput the paper, bold letter letters, such as $\xv$ and $\yv$, refer to vectors. Sets are denoted by calligraphic letters, such as $\Xc$ and $\Yc$. For a discrete set $\Xc$, $|\Xc|$ denotes its cardinality. ${\bf 0}_n$ denotes the all-zero vector in $\Real^n$. $I_n$ denotes the $n$-dimensional identity matrix. For $\xv\in\Real^n$, $\norm{\xv}^2=\sum_{i=1}^nx_i^2$.

\begin{figure}
  \includegraphics[width=\linewidth]{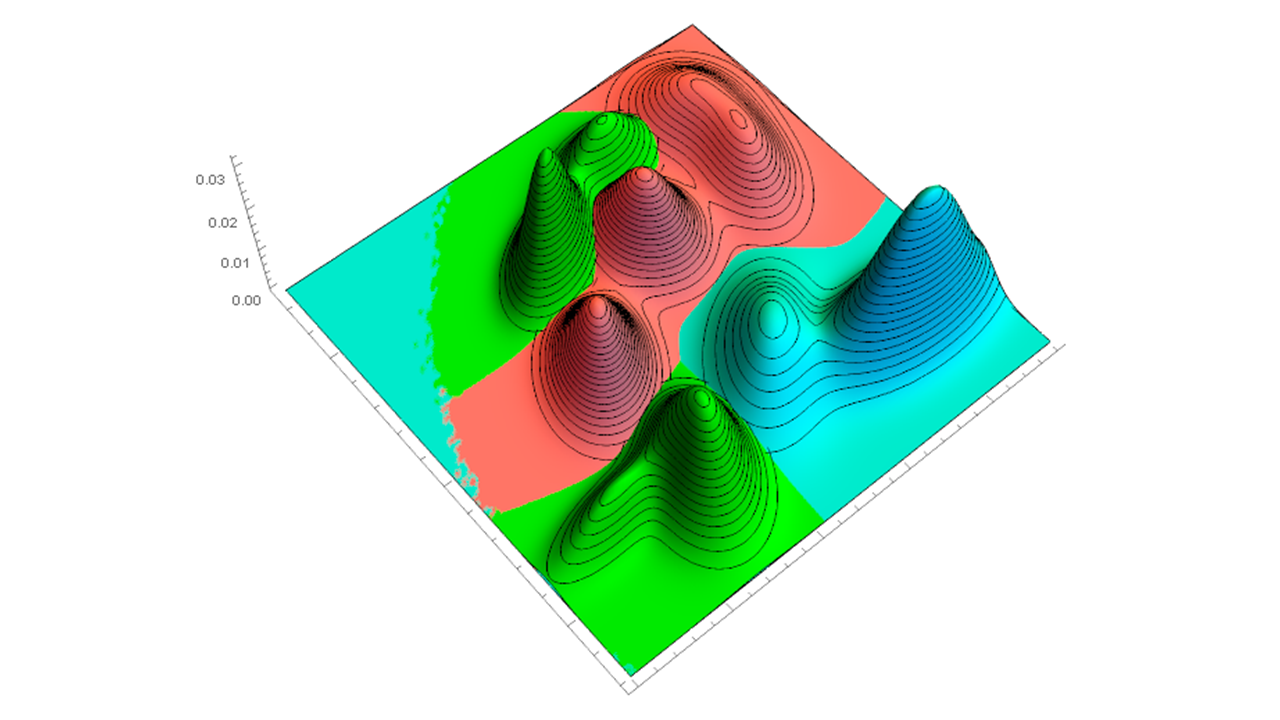}
  \caption{A collection of three GMMs with 4, 4 and 2 Gaussian components and their discriminant functions. Colors (red, green and blue) denote the three classes that the GMMs represent.}
  \label{fig:3GMMs}
\end{figure}
\subsection{$L$-layer neural networks and the activation function $\sigma$}\label{sec:l-layer}

\textit{$L$-layer Neural Network}. Consider a fully-connected neural network with $L$ hidden layers.  We refer to a network with $L=1$ hidden layer as an SNN and to a network with $L>1$ hidden layers as a DNN. Let $\xv\in\Real^n$ denote the input vector.   The function generated by an $L$-layer neural network, ${nn}:  \Real^{n} \rightarrow \Real^{c}$, where $c$ denotes the number of classes, can be represented as a composition of affine functions and non-linear functions $\sigma_l$, as follows
\begin{align}
{ nn}(\xv)=\sigma_{L+1} T^{[L+1]}\sigma_L T^{[L]}...\sigma_1 T^{[1]}(\xv).\label{eq:def-nn}
\end{align}
Here, $T^{[\ell]}:\Real^{n_{\ell-1}}\to \Real^{n_{\ell}}$ denotes the affine mapping applied at layer $\ell$.  Mapping  $T^{[\ell]}$ is represented by linear transformation $W^{[\ell]}\in\Real^{n_{\ell}\times n_{\ell-1}}$  and translation ${\bf b}^{[\ell]}$.  Moreover, $\sigma_{\ell}:\Real\to\Real$ denotes the non-linear activation function  
applied element-wise at layer $\ell$, $\ell=1,\ldots,L+1$. By convention, layer $L+1$ is referred to as the output layer. In this definition, $n_{\ell}$, $\ell=1,\ldots,L$, denotes the number of hidden  nodes in layer $\ell$. To make the notation consistent, for $\ell=0$, let $n_{0}=n$, and for $\ell=L+1$, $n_{L+1}=c$.
We will occasionally use the notations $dnn$ and $snn$ to signify that the cases of  $L>1$ and $L=1$, respectively.  In a classification task, the index of the highest value output  tuple determines the optimal class for input $\xv$. \\\\
\textit{The non-linear function $\sigma$}.  
For the two-layer construction in Section~\ref{sec:sufficiency}, we require some regularity
assumptions on the activation function $\sigma$, which are met by typical smooth NN activation
functions such as the sigmoid function. In Section~\ref{sec:remarks-conclusions}, 
we indicate how to refine the proofs to accommodate
the popular and simple $\rm ReLU$ activation function. The proof of the inefficiency of SNNs in
Section~\ref{sec:exp-nodes-necessary} applies to a very 
general class of activation functions including the $\rm ReLU$ function.


%
\subsection{GMMs and their optimal classification functions}\label{section:GMM-optimal}

Consider the problem of classifying points generated by a mixture of Gaussian distributions. 
We assume that there are $c$ classes and members of each class are drawn from a mixture of Gaussian 
distributions. Assume that there are overall $k$ different Gaussian 
distributions to draw from for all classes. Each Gaussian 
distribution is assigned uniquely to one of the $c$ classes. For $j\in\{1,\ldots,k\}$, let  
$\mvec{\mu}_{j}\in\Real^n$ and $\Sigma_{j}\in\Real^{n\times n}$ denote the 
mean and the covariance matrix of Gaussian distribution $j$. Also, let $\rho_i$, $i=1,\ldots,c$, 
denote the prior probability that a data point is drawn from Class $i$. Assume that the assignment 
of the Gaussian distributions to the classes is represented by sets $\Tc_1,\ldots,\Tc_c$, which 
form a partition of  $\{1,\ldots,k\}$. (That is, $\Tc_i\cap\Tc_j=\emptyset$, for $i\neq j$, 
and $\cup_{i=1}^c\Tc_i=\{1,\ldots,k\}$.) Set $\Tc_i$ represents the indices of the 
Gaussian distributions corresponding to Class $i$.
Finally, for Class $i$, let $w_j$, $j\in\Tc_i$, denote the conditional probability that within Class 
$i$, the data comes from Gaussian distribution $j$. By this definition, for $i=1,\ldots,n$, 
we have $\sum_{j\in\Tc_i}w_j=1.$
Under this model, and with a slight abuse of notation, the data are distributed as
\begin{align}
\sum_{i=1}^c \rho_i\sum_{j\in\Tc_i}w_{j} \Nc(\mvec{\mu}_{j},{\Sigma}_{j}).
\end{align}
Conditioned on being in Class $i$, the data points are drawn from a mixture  of $|\Tc_i|$ Gaussian distributions as $\sum_{j\in\Tc_i}w_{j} \Nc(\mvec{\mu}_{j},{\Sigma}_{j}).$
For $j=1,\ldots,k$, let $p^G_j:\Real^n\to\Real$ denote the probability density function (pdf) of a
single Gaussian distribution $\Nc(\mvec{\mu}_{j},{\Sigma}_{j})$. 

An optimal classifier $\Cc^*$,  maximizing  the  probability of membership across all classes, operates as follows
\begin{align}
\Cc^*(\xv) &= \argmax_{i\in \{1,\ldots,c\}}\; \rho_i \sum_{j\in\Tc_i}w_{j}p^G_{j}(\xv)\nonumber\\
 &= \argmax_{i\in \{1,\ldots,c\}}\; \rho_i \sum_{j\in\Tc_i}{w_{j}\over |2\pi \Sigma_{j}|^{1\over 2}}\exp\left(-{1\over 2}(\xv-\mvec{\mu}_{j})^T\Sigma_{j}^{-1}(\xv-\mvec{\mu}_{j})\right)\nonumber\\
  &= \argmax_{i\in \{1,\ldots,c\}}\; \rho_i \sum_{j\in\Tc_i}w_{j} \gamma_j \exp(-g_{j}(\xv) ), \nonumber
\end{align}
where, for $j=1,\ldots,k$,
\[
g_{j}(\xv)\triangleq{1\over 2}\xv^T\Sigma_{j}^{-1}\xv-\mvec{\mu}_{j}^T\Sigma_{j}^{-1}\xv+{1\over 2}\mvec{\mu}_{j}^T\Sigma_{j}^{-1}\mvec{\mu}_{j},
\]
and
\[
\gamma_{j}\triangleq |2 \pi \Sigma_{j}|^{-1/2}.
\]
  We  refer to  $\Cc^*(\xv)$ as the optimal classifier for the $c$ GMMs. 
Define the $i$-th discriminant function $d_i:\Real^n\to\Real$, as 
\begin{align}
d_i(\xv)  \triangleq \rho_i \sum_{j\in\Tc_i}w_{j}\gamma_j \exp(-g_{j}(\xv)). \label{eq:def-discriminant-func}
\end{align}
Note that these are the functions that we wish to approximate using 
DNNs and SNNs.  
Using this definition, the optimal classifier  $\Cc^*$ can be characterized in terms of the $c$ discriminant functions as 
\begin{equation}\label{eqn:optimal_classifier}
\Cc^*(\xv)=\argmax_{i\in\{1,\ldots,c\}} d_i(\xv).
\end{equation}
%


\section{Connection between classification and approximation}\label{sec:connection}

The main result of this paper is that the  discriminant functions described in \eqref{eq:def-discriminant-func},  required for computing optimal classification  
function $\Cc^*(\xv)$, can be approximated accurately by a relatively small  neural network with \textit{two} hidden layers, but that 
accurate approximation with a  \textit{single}  hidden layer network is only possible if either the  number of the nodes or the magnitudes of the coefficients  are exponentially large in $n$.


In this section, we first establish a connection between the accuracy in  approximating the  discriminant functions and the error performance of a classifier that is based on these approximations. Given a non-negative function $d(\xv)$,  $d:\Real^n\to\Real$, and threshold $t>0$,  let  $\Sc_{d,t}$ denote the superlevel set of  $d(\xv)$ defined as
\[
\Sc_{d,t} \triangleq \left\{ \xv\in\Real^n \, : \, d(\xv) \geq t \right\}.
\]
\begin{definition}
A function $\hat{d}:\Real^n\to\Real$  is a $(\delta,q)$-approximation of a non-negative function  $d:\Real^n\to\Real$ under a pdf $p$, if
there is a threshold $t$, such that  $\P_p [ \Sc_{d,t} ] \geq 1-q$, and 
\begin{eqnarray}
\left| \hat{d}(\xv)-d(\xv)\right|  \leq \delta d(\xv),  & & \xv \in \Sc_{d,t}\label{eq:cond1}\\
0 \leq \hat{d}(\xv) \leq (1+\delta) t, & & \xv \not\in \Sc_{d,t}.\label{eq:cond2}
\end{eqnarray}
Let $t_{\hat{d},\delta,q}$ denote the corresponding threshold. If there are multiple such thresholds, let  $t_{\hat{d},\delta,q}$ denote the infimum of all such thresholds.
\end{definition}
In this definition, $\hat{d}$ closely approximates $d$ in a relative sense, wherever  $d(\xv)$ exceeds threshold $t$. The function $\hat{d}$  is small (in an absolute sense), where $d(\xv)$ is small, an event that occurs with low probability under $p$. Although $p$ and $d$ need not be related in this definition, we will typically use it in cases where $d$ is just a scaled version of $p$.

Given two equiprobable classes with pdf functions $p_1$ and $p_2$, the optimal Bayesian classifier  chooses class $1$, if $p_1(\xv) > p_2(\xv)$, and class 2  otherwise. Let  $e_{21,{\rm opt}} = \P_1[p_2(\xv) > p_1(\xv)]$ denote  the probability of incorrectly deciding class 2, when the true distribution is class 1. If we classify  using {\em approximate} pdfs  with relative errors bounded by $\alpha \geq 1$, then the probability of error increases to
$e_{21,{\rm opt}}[\alpha] := \P_1[ p_2(\xv) > p_1(\xv)/\alpha ]$. Under appropriate conditions, $e_{\rm {\rm opt}}(\alpha)$ approaches $e_{\rm {\rm opt}}$, as $\alpha$ converges  to 1. Lemma \ref{lemma:approx-class} below  shows that   $(\delta,q)$-approximations of $p_1$ and $p_2$  enable us to approach $e_{21,{\rm opt}}$, by taking $\delta$ and $q$ sufficiently small.
\begin{lemma}\label{lemma:approx-class}
Given pdfs $p_1$ and $p_2$, let  $\hat{d}_1$ and $\hat{d}_2$ denote $(\delta,q)$-approximations of discriminant functions $d_1 = p_1$ and $d_2 = p_2$ under distributions $p_1$ and $p_2$, respectively. Define $t_i$, $i=1,2$, as $t_i\triangleq t_{\hat{d}_i,\delta,q}$. Consider  a classifier that declares class 1 when $\hat{d}_1(X) > \hat{d}_2(X)$ and class 2 otherwise. Then, the probability of error of this classifier, under distribution 1, is bounded by
$$
e_{21} \leq e_{21,{\rm opt}}[{1+\delta\over 1-\delta}] + q +\P_1(\Sc_{d_1,(1+\delta)t_2/(1-\delta)}).
$$
\end{lemma}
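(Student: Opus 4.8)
\emph{Proof plan.} Set $\alpha:=(1+\delta)/(1-\delta)$ (with $\delta<1$ implicit in the statement) and $s:=\alpha t_2=(1+\delta)t_2/(1-\delta)$. The idea is to split the error event of the approximate classifier according to where $X\sim p_1$ falls relative to the two superlevel sets $\Sc_{d_1,t_1}$ and $\Sc_{d_2,t_2}$, and on each piece to trade the approximants $\hat d_1,\hat d_2$ for the true densities $d_1=p_1$ and $d_2=p_2$ using the two defining inequalities \eqref{eq:cond1} and \eqref{eq:cond2}. From the hypotheses I would first record three facts: since $\hat d_1$ is a $(\delta,q)$-approximation of $d_1$ under $p_1$ with threshold $t_1$, we have $\P_1[\Sc_{d_1,t_1}]\ge 1-q$, hence $\P_1[\Sc_{d_1,t_1}^{c}]\le q$, and by \eqref{eq:cond1}, $\hat d_1(\xv)\ge(1-\delta)d_1(\xv)$ for $\xv\in\Sc_{d_1,t_1}$; and since $\hat d_2$ is a $(\delta,q)$-approximation of $d_2$ under $p_2$ with threshold $t_2$, combining \eqref{eq:cond1} on $\Sc_{d_2,t_2}$ with \eqref{eq:cond2} on its complement gives the single pointwise bound $\hat d_2(\xv)\le(1+\delta)\max\{d_2(\xv),t_2\}$ on all of $\Real^n$.

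Next I would bound the error event $E:=\{\hat d_1(X)\le\hat d_2(X)\}$. Peeling off $\Sc_{d_1,t_1}^{c}$ costs at most $q$, so it remains to control $\P_1[E\cap\Sc_{d_1,t_1}]$. On this set the recorded inequalities chain to $(1-\delta)d_1(X)\le\hat d_1(X)\le\hat d_2(X)\le(1+\delta)\max\{d_2(X),t_2\}$, hence $d_1(X)\le\max\{\alpha d_2(X),\,s\}$, so at least one of $d_1(X)\le\alpha d_2(X)$ or $d_1(X)\le s$ holds. Using $d_1=p_1$ and $d_2=p_2$, the first alternative is $\{p_2(X)\ge p_1(X)/\alpha\}$, whose $p_1$-probability equals $e_{21,\mathrm{opt}}[\alpha]$ up to the boundary set $\{(1-\delta)p_1(X)=(1+\delta)p_2(X)\}$, and the second is $\{X\notin\Sc_{d_1,s}\}$. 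A union bound over the three pieces then yields a bound of the claimed form,
\[
e_{21}\;\le\; q \;+\; e_{21,\mathrm{opt}}\!\big[\tfrac{1+\delta}{1-\delta}\big] \;+\; \P_1\!\big[\,d_1(X)<\tfrac{(1+\delta)t_2}{1-\delta}\,\big],
\]
where the third term is the $p_1$-mass of the sublevel set $\{\,d_1<(1+\delta)t_2/(1-\delta)\,\}$, i.e.\ $1-\P_1[\Sc_{d_1,(1+\delta)t_2/(1-\delta)}]$.

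The one genuinely delicate point is the third region, where $X\in\Sc_{d_1,t_1}$ but $X\notin\Sc_{d_2,t_2}$: there the definition constrains $\hat d_2$ only through the \emph{absolute} bound $(1+\delta)t_2$ rather than a relative bound against $d_2$, so nothing prevents $\hat d_2>\hat d_1$, and it is precisely the isolation of this set — and charging it to the $p_1$-probability that $d_1$ stays below the inflated threshold $(1+\delta)t_2/(1-\delta)$ — that produces the third term and where the constants must be tracked carefully. Everything else is routine bookkeeping, once one observes that the boundary level sets appearing above ($\{d_1=s\}$ and $\{(1-\delta)p_1=(1+\delta)p_2\}$) carry zero $p_1$-mass in the continuous setting of interest, so the strict-versus-nonstrict inequalities in the definitions of $\Sc$ and of $e_{21,\mathrm{opt}}[\cdot]$ are harmless.
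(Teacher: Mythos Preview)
Your argument is correct and is essentially the paper's own proof: split off $\Sc_{d_1,t_1}^c$ at cost $q$, then on $\Sc_{d_1,t_1}$ case-split according to $\Sc_{d_2,t_2}$ versus $\Sc_{d_2,t_2}^c$, using the relative bound \eqref{eq:cond1} in the first case and the absolute bound \eqref{eq:cond2} in the second; your packaging via $\hat d_2\le(1+\delta)\max\{d_2,t_2\}$ is just a one-line compression of that same case split. You are also right that the third term should be the \emph{sublevel} probability $\P_1\!\big(d_1(\xv)\le(1+\delta)t_2/(1-\delta)\big)$, i.e.\ the complement of $\Sc_{d_1,(1+\delta)t_2/(1-\delta)}$, which is exactly what the paper's proof derives and what the surrounding discussion uses; the superlevel set appearing in the displayed statement is a typo.
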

The proof of Lemma \ref{lemma:approx-class} is presented in Appendix \ref{sec:proof-lemma2}.

Note that as $q$ converges to zero, both $t_1$ and $t_2$ converge to zero as well. Therefore, letting $q$  converge to zero ensures that $\P_1((1+\delta)t_2\geq (1-\delta){d}_1(\xv))$ also converges to zero. 
One way to construct a nearly optimal classifier for two distributions is thus to independently build a $(\delta,q)$ approximation for each distribution, and then define the classifier based on maximum of the two functions.

With this motivation, in the rest of the paper, we focus on  approximating the discriminant functions defined earlier for classifying GMMs. In the next section, we show that using a two hidden-layer neural network, we can construct a $(\delta,q)$-approximation $\hat{d}$  of the discriminant function  of a GMM, see \eqref{eq:def-discriminant-func}, with input dimension $n$, with $O(n)$ 
nodes or neurons, for any $\delta,q>0$.

In the subsequent section, we show by contrast that even for the simplest GMM consisting of a single Gaussian distribution, even a weaker approximation that bounds the {\em expected} $\ell_2$  error   cannot  be achieved  by a single hidden-layer network, unless the (neural) network has  either exponentially many nodes or exponentially large coefficients. The weaker definition of approximation that we will use in the converse result is the following.
\begin{definition}\label{def:ell2}
A function $\hat{d}:\Real^n\to\Real$  is an $\epsilon$-relative ${\ell_2}$ approximation for a function  $d:\Real^n\to\Real$ under pdf $p$, if
$$
\E_p \left[ \left(\hat{d}(\xv)-d(\xv)\right)^2 \right] \leq \epsilon \E_p \left[ \left(d(\xv)\right)^2 \right].
$$
\end{definition}
The following lemma shows that if approximation under this weaker notion is not possible, it also is not possible under the stronger $(\delta,q)$ notion. 
\begin{lemma}\label{lemma3}
If $\hat{d}$ is a $(\delta,q)$-approximation of a distribution $d$ under distribution $p$, then it is also an $\epsilon$-relative $\ell_2$ approximation of $d$, with parameter
$$\epsilon = \delta^2  + \frac{(1+\delta)^2 q}{1-q}. $$
\end{lemma}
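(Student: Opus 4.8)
The plan is to unfold the definition of a $(\delta,q)$-approximation and split the expectation $\E_p[(\hat d - d)^2]$ according to whether $\xv$ lies in the superlevel set $\Sc_{d,t}$ or not, where $t = t_{\hat d,\delta,q}$ is the associated threshold. On $\Sc_{d,t}$, condition \eqref{eq:cond1} gives the pointwise bound $(\hat d(\xv)-d(\xv))^2 \le \delta^2 d(\xv)^2$, so the contribution of this region to $\E_p[(\hat d-d)^2]$ is at most $\delta^2 \E_p[d(\xv)^2 \ind_{\Sc_{d,t}}] \le \delta^2 \E_p[d(\xv)^2]$. The first term $\delta^2$ in the claimed $\epsilon$ comes out immediately from here once we divide by $\E_p[d(\xv)^2]$.

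The second term requires estimating the contribution of the complement $\Sc_{d,t}^c$, on which we only know $0 \le \hat d(\xv) \le (1+\delta)t$ from \eqref{eq:cond2} and $d(\xv) < t$ from the definition of the superlevel set. Hence on $\Sc_{d,t}^c$ we have $|\hat d(\xv) - d(\xv)| \le \max\{\hat d(\xv), d(\xv)\} \le (1+\delta)t$, so $(\hat d - d)^2 \le (1+\delta)^2 t^2$, and the contribution of this region is at most $(1+\delta)^2 t^2 \, \P_p[\Sc_{d,t}^c] \le (1+\delta)^2 t^2 q$, using $\P_p[\Sc_{d,t}] \ge 1-q$.

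It remains to control $t^2$ in terms of $\E_p[d(\xv)^2]$. Since $\P_p[\Sc_{d,t}] \ge 1-q$, i.e. $\P_p[d(\xv) \ge t] \ge 1-q$, we have $\P_p[d(\xv)^2 \ge t^2] \ge 1-q$, and therefore by Markov's inequality (or directly, since $d(\xv)^2 \ge t^2$ on an event of probability at least $1-q$) we get $\E_p[d(\xv)^2] \ge (1-q)\, t^2$, i.e. $t^2 \le \E_p[d(\xv)^2]/(1-q)$. Substituting this bound into the complement estimate yields a contribution of at most $\frac{(1+\delta)^2 q}{1-q}\,\E_p[d(\xv)^2]$. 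Adding the two regions and dividing through by $\E_p[d(\xv)^2]$ gives $\E_p[(\hat d - d)^2] \le \bigl(\delta^2 + \frac{(1+\delta)^2 q}{1-q}\bigr)\E_p[d(\xv)^2]$, which is exactly the assertion.

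I expect the only delicate point to be the bound $t^2 \le \E_p[d^2]/(1-q)$: one must make sure the threshold $t$ used is the one tied to the approximation (the infimum threshold $t_{\hat d,\delta,q}$) and that the superlevel-set probability inequality $\P_p[d(\xv)\ge t]\ge 1-q$ indeed holds for it, which it does by the definition of a $(\delta,q)$-approximation. Everything else is a routine split-and-bound argument, so there is no genuine obstacle; care is needed only to keep the direction of each inequality straight.
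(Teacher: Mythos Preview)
Your proposal is correct and follows essentially the same route as the paper's proof: split $\E_p[(\hat d - d)^2]$ over $\Sc_{d,t}$ and its complement, use \eqref{eq:cond1} on the superlevel set, use \eqref{eq:cond2} together with $0\le d(\xv)<t$ on the complement, and control $t^2$ via $\E_p[d^2]\ge t^2\,\P_p[\Sc_{d,t}]\ge (1-q)t^2$. The paper writes the same inequalities using conditional expectations rather than indicator functions, but the argument is identical.
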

Proof of Lemma \ref{lemma3} is presented in Appendix~\ref{sec:proof-lemma3}.

\section{Sufficiency of two hidden-layer NN with $O(n)$ nodes}\label{sec:sufficiency}

\subsection{Overview}\label{subsec:overview}

We are interested in approximating the discriminant functions corresponding to optimal classification of  GMM, as defined in Section~\ref{section:GMM-optimal}. In particular, focusing on a particular class and dropping the subscript from $d_i(\xv)$, the function of interest is of the form
\begin{equation}
d(\xv) = \sum_{j=1}^J \beta_j \exp \left( -g_j(\xv) \right)  \label{eqn:discriminant}
\end{equation}
where
$g_j(\xv) = \frac{1}{2} (\xv-\mvec{\mu}_j)^T \Sigma_j^{-1} (\xv-\mvec{\mu}_j) = \frac{1}{2} (\xv-\mvec{\mu}_j)^T \left(\Sigma_j^{-1/2}\right)^T \left(\Sigma_j^{-1/2}\right)(\xv-\mvec{\mu}_j)$, where  $J$ is the number of Gaussian distributions in the mixture, and where $\beta_j = \rho w_j \gamma_j = \rho w_j |2\pi \Sigma_j|^{-1/2}$ for fixed prior $\rho$ and conditional probabilities $w_j$. Here, $\Sigma_j^{-1/2}$ denotes any symmetric decomposition of $\Sigma_j^{-1}$, such as the Cholesky decomposition.

We first observe that the function $g_j(\xv)$ is a general quadratic form in $\Real^{n}$
and thus consists of the sum of $O(n^2)$ product terms of the form $x_ix_j$.  Since each such product term can be approximated  via  four neurons (see \cite{lin2017does}), $g_j(\xv)$ can be 
approximated arbitrarily well using $O(n^2)$  neurons.   We can however reduce the number of nodes further by applying the affine transformation $\yv_j = \Sigma_j^{-1/2} (\xv-\mvec{\mu}_j)$ in the first layer, so that $g_j(\xv)$ is simply $\norm{\yv_j}^2=\sum_{i=1}^ny_{j,i}^2$, i.e., it consists of $n$ quadratic terms, and  can in turn can be   approximated via  $O(n)$  nodes. This $O(n^2)$ to $O(n)$ reduction in the number of neurons is specific to quadratic polynomials
which are generic exponents of GMM discriminant functions and  we will take advantage of this reduction in our proofs.

The key GMM discriminant function, which is a sum of exponentials of quadratic forms, has a natural implementation 
as a neural network. To get insight into the construction we use to prove the main result, consider the the NN model (\ref{eq:def-nn}) with $L=2$, 
$\sigma_1(x) = x^2$, and $\sigma_2(x)=\exp(-x)$. 
To compute the $j$-th component of $d(\xv)$ (\ref{eqn:discriminant}),
\begin{itemize}
\item Start with input $\xv\in\Real^n$ via $n$ input nodes.
\item Apply linear weights and biases to obtain $n$ results in the vector $\yv^{(j)} = \Sigma_j^{-1/2}\xv - \Sigma_j^{-1/2}\mvec{\mu}_j$.
\item Apply the activation function $\sigma_1(x)=x^2$ componentwise to $\yv^{(j)}$ to obtain $n$-vector $\zv^{(j)}$, the outputs of the first hidden layer
\item Apply $n$  unit weights to form the single sum $g_j(\xv) = \sum_k z_k^{(j)}$.
\item Apply the activation function $\sigma_2(x) = e^{-x}$ to obtain $\exp(-g_j(\xv))$, an output of the second hidden layer.
\end{itemize}
The overall network contains $J$ subnetworks, with the $j$-th subnetwork calculating $\exp(-g_j(\xv))$. The final output layer sums each of these outputs with weight $\beta_j$ to obtain $d(\xv)$.
The network contains $nJ$ nodes in the first hidden layer and $J$ in the second hidden layer, and calculates $d(\xv)$ exactly for any input $\xv$.

In many settings, one is interested in the expressibility of neural networks when restrictions are placed on the allowable activation functions. For example, we may focus on the case that a single common activation function must be used in each node, or that the activation functions have specific regularity properties. In such cases, 
we can modify our network, replacing nodes implementing $x^2$ and $\exp(-x)$ with groups of nodes implementing a more basic activation function, whose outputs are summed to achieve approximately the same result. We refer to such a group of basic nodes as a super-node.

In the proof, we rely on various assumptions on the activation function $\sigma(x)$. All of the assumptions are satisfied by the sigmoid function $\sigma(x) = 1/(1+\exp(-x))$, for example.

\begin{assumption}[Curvature]\label{assum:2nd_derivative}
There is a point $\tau\in\Real$, and parameters $r$ and $M$, such that
$\sigma^{(2)}(\tau)>0$, such that
  $\sigma^{(3)}(x)$ exists and is bounded, $|\sigma^{(3)}(x)| \leq M$,  in the neighborhood $\tau-r\leq x \leq \tau+r$.
  \end{assumption}
\begin{assumption}[Monotonicity]\label{assum:monotonic_sum}
The symmetric function $\sigma(x+\tau)+\sigma(-x+\tau)$ is monotonically increasing for $x\geq 0$, with  $\tau$ as defined in Assumption \ref{assum:2nd_derivative}.
\end{assumption}
\begin{assumption}[Exponential Decay]\label{assum:exp_decay}
There is $\eta > 0$ such that $|\sigma(x)|\leq \exp(\eta x)$ and 
$|\sigma(x)-1| \leq \exp(-\eta x)$.
\end{assumption}

Assumptions~\ref{assum:2nd_derivative} and \ref{assum:monotonic_sum} can be  used to construct an approximation of $x^2$ using $O(1)$ basic nodes for each such term. They are satisfied, for example, by common activation functions such as the sigmoid and 
$\tanh$ functions. Assumption~\ref{assum:exp_decay} is used to construct an approximation of $\exp(-x)$ in the second hidden layer, with  $O(n)$ nodes in each of $J$ subnetworks. This assumption is met by the indicator function $u(x) = 1\{x>0\}$, and any number of activation functions that are smoother versions of $u(x)$, including piecewise linear approximations of $u(x)$ constructed with $\rm ReLU$, and the sigmoid function.

The following is our main positive result, about the ability to efficiently approximate a GMM discriminant function with a two-layer neural network.

\begin{theorem-new}\label{theor:main_two_layer}
Consider a GMM with discriminant function $d:\Real^n\to \mathds{R}^{+}$ of the form (\ref{eqn:discriminant}), consisting of Gaussian pdfs with bounded covariance matrices. Let the activation function $\sigma:\Real \to \Real$  satisfy Assumptions ~\ref{assum:2nd_derivative}, \ref{assum:monotonic_sum}, and  \ref{assum:exp_decay} . Then for any given $\delta >0$ and $q \in (0, 1)$, there exists a two-hidden-layer neural network consisting of $ M = O(n)$ instances of the activation function $\sigma$ such that  its output function $\hat{d}$  is a $(\delta,q)$ approximation of $d$. 
\end{theorem-new}

\begin{remark}
Applying Theorem~\ref{theor:main_two_layer}  to a collection of $c$ GMMs gives rise to a DNN with $O(n)$ neurons that approximates the optimal classifier of these GMMs via (\ref{eqn:optimal_classifier}).
\end{remark}

\begin{remark}
The construction of an $O(n)$-neuron approximator of the GMM discriminant function assumes that the eigenvalues of the covariance matrices are bounded from above and also bounded away from zero.
\end{remark}

The detailed proof of Theorem \ref{theor:main_two_layer} is presented in Appendix \ref{app:proof-thm-1}. The proof relies on several lemmas that are stated and proved in Appendix~\ref{sec:supporting-lemmas}.
The main steps of the proof are the following:
\begin{itemize}
\item Map the overall approximation requirements to a related requirement that applies to each of the $J$ Gaussian components. Thereafter, we focus on a sub-network implementing a single Gaussian component.
\item Show that the first hidden layer can be constructed from $n$ pairs of nodes, where each pair of nodes  forms a sufficiently accurate approximation of $x^2$.
\item Show that the  second hidden layer can be constructed from a set of basic nodes that combine to form a sufficiently accurate approximation of $\exp(-x)$.
\item Show that the composition of two layers as constructed yields the required approximation accuracy.
\item Show that the number of basic nodes in the second hidden layer is $O(n)$.
\end{itemize}

As a first step in the proof of Theorem \ref{theor:main_two_layer}, given $d(\xv) = \sum_{j=1}^J \beta_j \exp \left( -g_j(\xv) \right)$, we build a neural net consisting of  $J$ sub-networks, with sub-network $j$ approximating $\beta_j \exp \left( -g_j(\xv) \right)$.
For convenience, denote by $c_j(\xv) = \beta_j \exp(-g_j(\xv))$, the desired output of the $j$-th subnetwork. The $J$ subnetwork function approximations $\hat{c}_j(\xv)$, $j=1,\ldots,J$, are summed up to get the final output $\hat{d}(\xv) = \sum_j \hat{c}_j(\xv)$.

\begin{lemma}\label{lemma:decomposition}
Given $\delta>0$, $q>0$, and the GMM discriminant function $d(\xv)=\sum_{j=1}^J c_j(\xv)$, let $t^*$ be such that $\P[\Sc_{d,t^*}] \geq 1-q$ under pdf $p(\xv) = \rho^{-1}  d(\xv)$ .
Define $\lambda = (t^*\delta)/(2J(1+\delta))$, and for each $j$, suppose we have an approximation function $\hat{c}_j$ of $c_j$ such that
\begin{eqnarray*}
\left| \hat{c}_j(\xv)-c_j(\xv) \right| \leq \delta/2 c_j(\xv),&  \mbox{if} \; c_j(\xv) \geq \lambda,\\
0 \leq \hat{c}_j(\xv) \leq \lambda(1+\delta),& \mbox{otherwise} &
\end{eqnarray*}
Then $\hat{d}(\xv) = \sum_j \hat{c}_j(\xv)$ is a $(\delta,q)$-approximation of $d(\xv)$ under $p(\xv)$.
\end{lemma}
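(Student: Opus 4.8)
The plan is to verify the two defining inequalities \eqref{eq:cond1} and \eqref{eq:cond2} of a $(\delta,q)$-approximation directly, using $t^*$ as the threshold and $p(\xv) = \rho^{-1} d(\xv)$ as the governing pdf. The probability condition $\P_p[\Sc_{d,t^*}] \geq 1-q$ holds by the choice of $t^*$, so the content is the pointwise bound on $\hat d - d$ on $\Sc_{d,t^*}$ and the absolute bound on $\hat d$ off $\Sc_{d,t^*}$. The key quantitative device is the gap parameter $\lambda = (t^*\delta)/(2J(1+\delta))$, which splits, for each fixed $\xv$, the index set $\{1,\ldots,J\}$ into the ``large'' components $j$ with $c_j(\xv) \geq \lambda$ and the ``small'' components with $c_j(\xv) < \lambda$.

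First I would bound $|\hat d(\xv) - d(\xv)| \leq \sum_{j=1}^J |\hat c_j(\xv) - c_j(\xv)|$ by the triangle inequality, then split the sum over large and small indices. On large indices the hypothesis gives $|\hat c_j - c_j| \leq (\delta/2)\, c_j(\xv)$; summing over these indices contributes at most $(\delta/2)\, d(\xv)$. On small indices, both $c_j(\xv) < \lambda$ and $0 \leq \hat c_j(\xv) \leq \lambda(1+\delta)$ hold, so $|\hat c_j(\xv) - c_j(\xv)| \leq \lambda(1+\delta)$ (the difference of two quantities, each in $[0,\lambda(1+\delta)]$). There are at most $J$ such indices, so their total contribution is at most $J\lambda(1+\delta) = t^*\delta/2$. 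Hence $|\hat d(\xv) - d(\xv)| \leq (\delta/2) d(\xv) + t^*\delta/2$. On $\Sc_{d,t^*}$ we have $d(\xv) \geq t^*$, so $t^*\delta/2 \leq (\delta/2) d(\xv)$, giving $|\hat d(\xv) - d(\xv)| \leq \delta d(\xv)$, which is exactly \eqref{eq:cond1}.

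For \eqref{eq:cond2}, take $\xv \notin \Sc_{d,t^*}$, i.e. $d(\xv) < t^*$. Nonnegativity $\hat d(\xv) = \sum_j \hat c_j(\xv) \geq 0$ is immediate since each $\hat c_j \geq 0$ in both cases of the hypothesis. For the upper bound, note that for every $j$ we have $\hat c_j(\xv) \leq (1+\delta/2)\max(c_j(\xv),\lambda) \leq (1+\delta)(c_j(\xv) + \lambda)$: on large indices $\hat c_j \leq (1+\delta/2)c_j(\xv) \leq (1+\delta)c_j(\xv)$, and on small indices $\hat c_j \leq (1+\delta)\lambda$. Summing, $\hat d(\xv) \leq (1+\delta)\big(d(\xv) + J\lambda\big) \leq (1+\delta)\big(t^* + J\lambda\big)$. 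Since $J\lambda = t^*\delta/(2(1+\delta)) \leq t^*$, this yields $\hat d(\xv) \leq (1+\delta)\cdot 2t^*$, which is off by a factor of $2$ from the required bound $(1+\delta)t^*$. The main obstacle is therefore bookkeeping the constants tightly enough: the cleanest fix is to observe that off $\Sc_{d,t^*}$ one can instead argue directly that $\hat d(\xv) \leq (1+\delta) t^*$ holds whenever $\Sc_{d,t^*}$ is taken with threshold adjusted, or to set $\lambda = (t^*\delta)/(2J(1+\delta))$ but use $d(\xv) \le t^*$ together with $J\lambda \le t^*\delta/(2(1+\delta))$ inside $(1+\delta)(d(\xv)+J\lambda) \le (1+\delta)t^* + t^*\delta/2 \le (1+\delta/2 + \delta(1+\delta)/2)t^*$; re-examining whether the intended threshold in the conclusion is $t^*$ or a slightly inflated multiple of it, and matching the factor-of-$2$ in the definition of $\lambda$ to whichever of \eqref{eq:cond1}, \eqref{eq:cond2} is the binding constraint, is the one place where care is needed. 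All other steps are routine triangle-inequality estimates.
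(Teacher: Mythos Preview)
Your argument for condition \eqref{eq:cond1} on $\Sc_{d,t^*}$ is exactly the paper's proof: triangle inequality, split indices at level $\lambda$, bound the large part by $(\delta/2)d(\xv)$ and the small part by $J\lambda(1+\delta)=t^*\delta/2\le(\delta/2)d(\xv)$.

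For condition \eqref{eq:cond2} you have the right decomposition but you give up a constant unnecessarily, and your proposed fixes do not close the gap. The loss happens when you pass from $\hat c_j\le(1+\delta/2)c_j(\xv)$ on large indices to $\hat c_j\le(1+\delta)c_j(\xv)$: there is no reason to weaken $\delta/2$ to $\delta$ here. The paper keeps the sharper bound and observes that the single inequality
\[
\hat c_j(\xv)\;\le\;(1+\delta/2)\,c_j(\xv)\;+\;\frac{t^*\delta}{2J}
\]
holds for \emph{every} $j$, since on small indices $\hat c_j\le\lambda(1+\delta)=t^*\delta/(2J)$ and $c_j\ge 0$, while on large indices $\hat c_j\le(1+\delta/2)c_j$ and $t^*\delta/(2J)\ge 0$. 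Summing over $j$ gives $\hat d(\xv)\le(1+\delta/2)d(\xv)+t^*\delta/2$, and then $d(\xv)<t^*$ yields $\hat d(\xv)\le(1+\delta/2)t^*+t^*\delta/2=(1+\delta)t^*$ on the nose. So no threshold adjustment or re-interpretation of the conclusion is needed; the factor of $2$ you found is an artifact of the premature relaxation $(1+\delta/2)\to(1+\delta)$.
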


See Appendix~\ref{plemma:decomposition} for proof.

This lemma establishes a sufficient standard of accuracy that we will need for the subnetwork associated with each Gaussian component. In particular, there is a level $\lambda$ such that we need to have relative error better than $\delta/2$ when the component function is greater than $\lambda$. Where the component function is smaller than $\lambda$, we require only an upper bound on the approximation function. The critical level $\lambda$ is proportional to $t^*$, which is a level exceeded with high probability by the overall discriminant function $d$. The scaling of the level $t^*$ with $n$ is an important part of the proof of Theorem~\ref{theor:main_two_layer}, and is analyzed later in Lemma \ref{lemma11}.
See Appendix \ref{app:proof-thm-1} for the remaining steps in  the proof of Theorem \ref{theor:main_two_layer}.


\section{Exponential size of SNN for approximating the GMM discriminant function}\label{sec:exp-nodes-necessary}

In the previous section, we showed that a DNN with two hidden layers, and $O(n)$ hidden nodes is able to approximate the discriminant functions corresponding to an optimal Bayesian classifier for a collection of GMMs. In this section, we prove a converse result for SNNs. More precisely, we prove that for an SNN to approximate the discriminant function of even a single Gaussian distribution, the number of nodes needs to grow exponentially with $n$.  

Consider a  neural network with a single hidden layer consisting of $n_1$ nodes. As before, 
let  $\sigma:\mathds{R}\to\mathds{R}$ denote the  non-linear function applied by  each hidden 
neuron. For $i=1,\ldots,n_1$, let $\wv_i\in\mathds{R}^n$,  and 
$b_i\in\mathds{R}$ denote the weight vector and the bias corresponding to node $i$, 
respectively. The function generated  by this network can be written as
\begin{align}
f(\xv)&=\sum_{i=1}^{n_1} a_i\sigma(\langle \wv_i,\xv\rangle+b_i)+a_0.\label{eq:def-f-orig}
\end{align}
Suppose that $\xv\in\Real^n$ is distributed  as $\Nc(\mvec{0}_0,{s}_xI_n)$, with pdf  $\mu:\Real^n\to\Real$.  Suppose that the function to be approximated is 
\begin{align}
\mu_c(\xv)\triangleq \left({s}_f+2{s}_x \over {s}_f\right)^{n\over 4}\ex^{-{1\over 2{s}_f}\|\xv\|^2},\label{eq:def-mu-c}
\end{align}
 which has the form of a symmetric zero-mean Gaussian distribution with variance $s_f$ in each direction, and has been normalized so that $ \E[\mu_c^2(\xv)]=1$.  Our goal is to show that unless the number of nodes $n_1$ is exponentially large in the input dimension $n$, the  network cannot approximate the function $\mu_c$ defined in \eqref{eq:def-mu-c} in the sense of Definition~\ref{def:ell2}.

Our result applies to very general activation functions, and allows a different activation function in every node; essentially all we require is that i) the response of each activation function depends on its input $\xv$ through a scalar product $\langle \wv_i,\xv\rangle$, and ii) the output of each hidden node is square-integrable with respect to the Gaussian distribution $\mu(\xv)$. Incorporating the constant term $a_0$
into one of the activation functions, we consider a more general model
\begin{align}
f(\xv)&=\sum_{i=1}^{n_1} a_i h_i(\langle \wv_i,\xv\rangle)\label{eq:def-f-alt}
\end{align}
for a set of functions $h_i:\Real \to\Real$.
To avoid  scale ambiguities in the definition of the coefficients $a_i$ and the functions $h_i$, we scale $h_i$ as necessary so that, for $i=1,\ldots,n_1$, $\norm{\wv_i} = 1$ and  $\E [ (h_i(\langle \wv_i,\xv\rangle))^2 ] = 1$.

Our main result in this section shows the connection between the number of nodes ($n_1$) and the achievable approximation error $\E[|\mu_c(\xv)-f(\xv)|^2]$. We focus on on $\epsilon$-relative $l_2$ approximation, as defined in Definition \ref{def:ell2}, which, as shown in Lemma \ref{lemma3} is weaker than the notion used in Theorem \ref{theor:main_two_layer}. Therefore, proving the lower bound under the weaker notion, automatically proves the same bound under the stronger notion as well.  

\begin{theorem-new}\label{thm:main}
Consider $\mu_c:\mathds{R}^n\to\mathds{R}$ and $f:\mathds{R}^n\to\mathds{R}$ defined in \eqref{eq:def-mu-c} and \eqref{eq:def-f-alt}, respectively, for some $s_f>0$. 
Suppose that  random vector $\xv\sim  \Nc({\bf0}_n,{s}_xI_n)$, where $s_x>0$. For $i=1,\ldots,n_1$, assume that  $\norm{\wv_i} = 1$,  and  $\E \left[ \left(h_i(\langle \wv_i,\xv\rangle)\right)^2 \right] = 1$, for activation functions $h_i:\Real\to\Real$. Then, 
\begin{align}
\E[|\mu_c(\xv)-f(\xv)|^2] \geq 1-  2 \sqrt{n_1} \norm{\av}   \left(1 + s_x/s_f\right)^{1/4} \rho^{-n/4},
\end{align}
where 
\begin{align}
\rho \triangleq  1 + {s_x^2\over s_f^2 + 2 s_x s_f} > 1.\label{eq:def-rho}
\end{align}
\end{theorem-new}

This result shows that if we want to form an $\epsilon$-relative $\ell_2$ approximation of $\mu_c$, in the sense of Definition~\ref{def:ell2}, with an SNN, $n_1$ must satisfy
\begin{align}
n_1 \geq \frac{1-\epsilon}{2A \left(1+s_x/s_f\right)^{1/4}}\rho^{n/4},
\end{align}
where $A={1\over \sqrt{n_1}}\norm{\av}$ denotes the root mean-squared value of $\av$. 
That is, the number of nodes need to grow exponentially with $n$, unless the magnitude $A$ of the final layer coefficients vector $\norm{\av}$ grows exponentially in $n$ as well.
Note that in the natural case $s_f = s_x$ where the discriminant function to be approximated matches the distribution of the input data, the required exponential rate of growth is $\rho^{n/4} = (4/3)^{n/4}$.

\begin{proof}[Proof of Theorem~\ref{thm:main}]
The mean squared error can be bounded as
\begin{eqnarray*}
\E[|\mu_c(\xv)-f(\xv)|^2] & \geq &  \ E[|\mu_c(\xv)|^2] - 2 \E\left[ \mu_c(\xv)f(\xv)\right]  + \ E\left[|f(\xv)|^2\right] \\
  & \geq & 1 - 2 \sum_i a_i \E \left[ \mu_c(\xv) h_i(\langle \wv_i, \xv\rangle) \right] \\
 & \geq & 1 - 2 \norm{\av} \norm{\rv}
\end{eqnarray*}
where the last step follows from the Cauchy-Schwarz inequality. Here, the $\rv\in\Real^{n_1}$, and $r_i \triangleq \E \left[ \mu_c(\xv) h_i(\langle \wv_i, \xv\rangle) \right]$.

We next bound the magnitude of $r_i$, showing that each is exponentially small in $n$.
By the rotational  symmetry of $\mu_c$ and  $\mu$, we have
\[
 \E \left[ \mu_c(\xv) h_i(\langle \wv_i, \xv\rangle) \right]=\E[\mu_c(\xv) h_i(x_1)].
\]
Defining $\alpha = 1 + 2s_x/s_f$ and $\beta = \left( 1/s_x + 1/s_f\right)$, we write
\begin{eqnarray*}
r_i & = & \alpha^{n/4} \left(2\pi s_x\right)^{-n/2} \int h(x_1) e^{-\frac{1}{2\beta} \normsq{\xv}} d\xv \\
 & = & \alpha^{n/4} \left(2\pi s_x\right)^{-n/2} \left(2\pi \beta\right)^{(n-1)/2} \int h(x_1) e^{-\frac{1}{2\beta} x_1^2}  dx_1\\
 & = & \alpha^{n/4} \left(\beta/ s_x\right)^{n/2}  \int h(x_1) e^{-\frac{1}{2\beta} x_1^2} \left( 2\pi \beta\right)^{-1/2} dx_1
\end{eqnarray*}
Note that $ \int h(x_1) e^{-\frac{1}{2\beta} x_1^2} \left( 2\pi \beta\right)^{-1/2} dx_1$ is the expected value of $ h(x_1)$, with respect to $x_1\sim\Nc(0,\beta)$. Therefore, using the Jensen inequality, 
\[
\left( \int h(x_1) e^{-\frac{1}{2\beta} x_1^2} \left( 2\pi \beta\right)^{-1/2} dx_1 \right)^2\leq  \int (h(x_1))^2 e^{-\frac{1}{2\beta} x_1^2} \left( 2\pi \beta\right)^{-1/2} dx_1.
\]
Therefore, 
\begin{eqnarray*}
r_i^2 & \leq & \alpha^{n/2} \left(\beta/ s_x\right)^{n}  \int\left( h(x_1)\right)^2 e^{-\frac{1}{2\beta} x_1^2} \left( 2\pi \beta\right)^{-1/2} dx_1 \\
& \leq & \alpha^{n/2} \left(\beta/ s_x\right)^{n-1/2}  \int\left( h(x_1)\right)^2 e^{-\frac{1}{2 s_x} x_1^2} \left( 2\pi s_x\right)^{-1/2} dx_1 \\
& \leq & \alpha^{n/2} \left(\beta/ s_x\right)^{n-1/2}
\end{eqnarray*}
where  the second step holds because $\beta = s_x s_f / (s_f + s_x) < s_x$, and therefore,  $\exp(-x_1^2/(2\beta) ) < \exp(-x_1^2/(2 s_x) ) $, and  the last step follows from our initial assumption that $\E [ (h_i(x_1))^2] = 1$.
Noting that
\begin{align*}
\alpha^{n/4} \left(\beta/s_x\right)^{n/2} = \left[ \left( \frac{s_f+2 s_x}{s_f}\right)\left( \frac{s_f}{s_f+s_x}\right)^2 \right]^{n/4}
 = \left( \frac{sf^2 + 2 s_f s_x + s_x^2}{s_f^2 + 2 s_f s_x}\right)^{-n/4} = \rho^{-n/4},
\end{align*}
we obtain
\begin{align}
|r_i| \leq \rho^{-n/4} (1+s_x/s_f)^{1/4}
\end{align}
This establishes that $\norm{\rv} \leq \sqrt{n_1} \rho^{-n/4} (1+s_x/s_f)^{1/4}$, which finishes the proof.
\end{proof}
 \begin{remark}
The  generalized model \eqref{eq:def-f-alt} covers a large class of  activation functions. It is straightforward to confirm that the required conditions are satisfied by bounded activation functions, such as the sigmoid  function or the $\tanh$ function, with arbitrary bias values. For the popular ReLU function,  $h_i(\langle \wv_i,\xv\rangle)=\max(|\langle \wv_i,\xv\rangle+b_i|,0)$. Therefore, $\E[|h_i(\langle \wv_i,\xv\rangle)|^2]\leq \E[(\langle \wv_i,\xv\rangle+b_i)^2]=s_x+b_i^2$, which again confirms the desired square-integrability  property. 
\end{remark}

\begin{remark}
From the point of numerical stability, it is natural to require  the norm of the final layer coefficients, $\norm{\av}$,  to be bounded, as the following simple argument shows. Suppose that network implementation  can compute each activation function $h_i(x)$ exactly, but that the implementation represents each coefficient $a_i$ in a floating point format with a finite precision. To gain intuition on the effect of this quantization noise, consider the following modeling. The implementation replaces  $a_i$ with $a_i + z_i$,  where  $\E[z_i]=0$ and $\E[z_i^2]=\nu |a_i|^2$ and $\nu$ reflects the level of precision in the representation.  Further assume that $z_1,\ldots,z_{n_1}$ are independent of each other and of  $\xv$. Then, the  error due to quantization can be written as
\begin{align}
\E \left[ \sum_i z_i^2 \left(h_i(\langle \wv_i,\xv\rangle)\right)^2 \right] = \sum_i \nu |a_i|^2 \E \left[\left(h_i(\langle \wv_i,\xv\rangle)\right)^2\right] = \nu \norm{\av}^2
\end{align}
In such an implementation, in order to keep the quantization error significantly below the targeted overal error $\epsilon$, we  need to have
\begin{align}
\norm{\av} \ll \sqrt{\epsilon/\nu}.
\end{align}
Unless the magnitudes of the weights used in the output layer are bounded in this way, accurate computation is not achievable in practice.
\end{remark}

\section{Sufficiency of exponentially many neurons}

In Section~\ref{sec:exp-nodes-necessary}, we studied the ability of an SNN  to approximate function $\mu_c$ defined as \eqref{eq:def-mu-c} and showed that such a network, if the weights are not allowed to grow exponentially with $n$, requires exponentially many nodes to make the error small. Clearly, Theorem \ref{thm:main} is a converse result, which  implies that the number of nodes $n_1$ should grow with $n$, at least as $\rho^{n\over 4}$ ($\rho>1$). The next natural question is the following: Would exponentially many nodes actually suffice in order to approximate function $\mu_c$?  In this section, we answer this question affirmatively and show a simple construction with random weights that, given enough nodes,  is able to well approximate function $\mu_c$ defined in \eqref{eq:def-mu-c}, within the desired accuracy. Recall that $\mu_c(\xv)= \alpha^{n\over 4}\exp(-{1\over 2{s}_f}\|\xv\|^2)$, 
where $\alpha\triangleq {{s}_f+2{s}_x \over {s}_f}.$
Consider  the output function of a single-hidden layer  neural network with all biases set to zero. The function generated by such a network  can be written as
\begin{align}
f(\xv)&=\sum_{i=1}^{n_1} a_i\sigma(\langle \wv_i,\xv\rangle).\label{eq:func-f-no-bias}
\end{align}
As before, here, $\sigma:\mathds{R}\to\mathds{R}$ denotes the non-linear function and $\wv_i\in\mathds{R}^n$, $\|\wv_i\|=1$,  denotes the weights used by hidden node $i$. To show sufficiency of exponentially many nodes, we consider a particular non-linear function $\sigma(x)=\cos({x\over \sqrt{{s}_f}})$.

\begin{theorem-new}\label{thm:random-weights}
Consider function $\mu_c:\mathds{R}^n\to\mathds{R}$, defined in \eqref{eq:def-mu-c}. Also, consider  random vector $\xv\in\mathds{R}^n$,  where $x_1,\ldots,x_n$ i.i.d.~$\Nc(0,{s}_x)$. Consider function $f:\mathds{R}^n\to\mathds{R}$ defined in \eqref{eq:func-f-no-bias} and assume that, for $i=1,\ldots,n_1$, $a_i={\alpha^{n\over 4} \over n_1}$, where $\alpha= 1+2{{s}_x / {s}_f}$,  and that
\begin{align}
\sigma(x)=\cos({x\over \sqrt{{s}_f}}).
\end{align}
Given $\epsilon>0$, assume that
\[
n_1>{1\over \epsilon} \alpha^{n\over 2}.
\]
Then, there exists weights $\wv_1,\ldots,\wv_{n_1}$ such that
\[
\E_{\xv}[(f(\xv)-\mu_c(\xv))^2] \leq \epsilon.
\]
\end{theorem-new}
Theorem \ref{thm:random-weights} is proved in 
Appendix \ref{appendix-last}.

To better understand the implications of Theorem \ref{thm:random-weights} and how it compares against Theorem \ref{thm:main}, define
\[
m_1={ \rho}=1 +  {{s}_x\over 2{s}_f}\Big(1-{s_f\over s_f + 2 s_x }\Big),
\]
and
\[
m_2=\alpha^2=1+ {4{s}_x\over {s}_f}\Big(1+{{s}_x\over {s}_f}\Big).
\]
where $\rho$ is defined in \eqref{eq:def-rho}. It is straightforward to see that $1< m_1< m_2$, for all positive values of $({s}_x,{s}_f)$.
Theorems \ref{thm:main} and  \ref{thm:random-weights} show that there exist constants $c_1$ and $c_2$, such that if the number of hidden nodes in a single-hidden-layer network ($n_1$) is smaller than $c_1m_1^{n\over 4}$, the expected  error in approximating function $\mu_c(\xv)$ must get  arbitrarily  close to one.
On other hand, if $n_1$ is larger than $c_2m_2^{n\over 4}$, then there exists a set of weights such that the error can be made arbitrary close to zero.  In other words, it seems that there is a phase transition in the exponential growth of the number of nodes, below which, the function cannot be approximated with a single hidden layer. 
Characterizing that phase transition more precisely
is an interesting open question, which we leave to future work.

\section{Related work}
There is a rich and well-developed literature on the complexity of Boolean 
circuits, and the important role depth plays in them. 
However, since it is not clear to what extend a result on Boolean circuits has a consequence for DNNs,  we
do not summarize this literature.  The interested reader may wish to start with 
\cite{boolean-survey}.  A key notion for us is that of depth, namely, 
the number of (hidden) layers of neurons in a neural network as defined
in Section~\ref{sec:l-layer}.  We are interested to know to
what extent, if any, depth reduces complexity of the neural network to 
express or approximate functions of interest in classification.  It is not
the complexity of the function that we want to approximate that matters,
because the UAT already tells us that regular functions, which include discriminant 
functions we discussed in Section~\ref{section:GMM-optimal}, can be approximated by SNNs, shallow neural networks.  But the complexity of the NNs, as measured by the \textit{number} of neurons needed for the approximation is of interest to us.   In this respect, the work of  \cite{delalleau2011shallow,martens2014expressive,cohenetal} contain approximation 
results for neural structures for certain polynomials and tensor functions,
in the spirit of what we are looking for, but as with Boolean circuits, these 
models deviate substantially from the standard DNN models we consider 
here, those that represent the neural networks that have worked 
well in practice and for whose behavior we wish to obtain  fundamental insights.
 
Remarkably, there is a small collection of recent results which, 
as in this report, show that adding a single layer to an 
SNN reduces the number of neurons by a logarithmic factor 
for approximation of some special functions: 
see \cite{telgarsky,eldan2016power,rolnick2018the} 
for approximation of high-degree polynomials, a certain radial function, and 
saw-tooth functions, respectively. Our work is therefore in the same 
spirit as these, showing \textit{the power of two} in the reduction of complexity of 
DNNs, and is therefore, the continuation and generalization of the said 
set of results and is especially informed by \cite{eldan2016power}.

\section{Remarks and conclusion}\label{sec:remarks-conclusions}

Two remarks are worth making at the conclusion of this paper.  First,
even though we used a variety of sufficient 
regularity assumptions for 
the non-linear function $\sigma$ used in the construction of our DNNs,
these assumptions are not necessary to construct an efficient two-layer network.
For example, to construct a network using the commonly used Rectifier Linear 
Unit ($\rm ReLu$) activation, in the first layer we can form $n$ 
super-nodes, each of which has a piecewise constant response $h_i(x)$ that approximates $x^2$ with the accuracy specified in Lemma~\ref{lemma:first_layer}. The number of basic nodes needed in each super-node in this construction is $2R/\sqrt{\nu}$, where $R$ and $\nu$ denote the range and the accuracy for approximating $x^2$ in layer one, respectively. The analysis of  $R$ and $\nu$ in Lemma~\ref{lemma:composed_accuracy} and \ref{lemma:tstar} shows that $R$ is $O(\sqrt{n})$ and $\nu$ is $O(1/n)$, so that the number of nodes needed per super-node in the first layer is now $O(n)$, compared to $O(1)$ in the construction presented in Section~\ref{sec:sufficiency}. Since there are $n$ such nodes, the total number of basic nodes in the network becomes  $O(n^2)$ - still an exponential reduction compared with a single layer network.

The second remark is that in this work we assumed the distribution functions of all 
the GMMs are given and data is tightly represented in $\Real^{n}$, that is,
the covariance matrices are not degenerate.   
Of course in practice no distributions are given (GMMs or others) 
and there is often much redundancy and the ambient dimension of data could be reduced
significantly.  Many layers of practical DNNs are essentially used to learn the 
empirical distributions of the data and possibly reduce redundant 
dimensions via projections, albeit implicitly.  Therefore, our construction 
is meant to be prototypical and 
demonstrative of the inherent value of depth for reduction of complexity 
rather than a prescription for implementation.\\\\
%

\section{Appendices}
\appendix

\section{Proof of Lemma \ref{lemma:approx-class}}\label{sec:proof-lemma2}
Note that 
\begin{align}
e_{21}&=\P_1(\hat{d}_2(\xv)\geq \hat{d}_1(\xv))=\P_1(\hat{p}_2(\xv)\geq \hat{p}_1(\xv),\xv\in\Sc_{p_1,t_1})+\P_1(\hat{p}_2(\xv)\geq \hat{p}_1(\xv)),\xv\in\Sc^c_{p_1,t_1})\nonumber\\
&\leq \P_1(\Sc^c_{p_1,t_1}) +\P_1(\hat{p}_2(\xv)\geq \hat{p}_1(\xv),\xv\in\Sc_{p_1,t_1})\nonumber\\
&\leq q+\P_1(\hat{p}_2(\xv)\geq \hat{p}_1(\xv),\xv\in\Sc_{p_1,t_1},\xv\in\Sc_{p_2,t_2})+\P_1(\hat{p}_2(\xv)\geq \hat{p}_1(\xv),\xv\in\Sc_{p_1,t_1},\xv\in\Sc^c_{p_2,t_2})\nonumber\\
&\leq q+\P_1((1+\delta)p_2(\xv)\geq (1-\delta)p_1(\xv))+\P_1(\hat{p}_2(\xv)\geq \hat{p}_1(\xv),\xv\in\Sc_{p_1,t_1},\xv\in\Sc^c_{p_2,t_2})\nonumber\\
&\leq q+\P_1({1+\delta\over 1-\delta} p_2(\xv)\geq p_1(\xv))+\P_1((1+\delta)t_2\geq (1-\delta){p}_1(\xv)),
\end{align}
which yields the desired result. 


\section{Proof of Lemma \ref{lemma3}}\label{sec:proof-lemma3}
Let $t$ be the threshold in the definition of $(\delta,q)$ approximation. Where $d$ exceeds $t$, we have $\E [ (\hat{d}-d)^2 \,|\, \Sc_{d,t}] \leq \delta^2 \E [ d^2 \,|\, \Sc_{d,t}]$, and where $d$ is less than $t$, we have $\E [ (\hat{d}-d)^2 \,|\, \bar{S}_{d,t}] \leq (1+\delta)^2t^2$.
We also know that
$$
\E [d^2] \geq \E[d^2 \,|\, \Sc_{d,t}]\P[\Sc_{d,t}] \geq t^2(1-q)
$$
Hence, we have
\begin{eqnarray*}
\E [ (\hat{d}-d)^2] & = & \E [ (\hat{d}-d)^2 \,|\, \Sc_{d,t}]\P[\Sc_{d,t}] + \E [ (\hat{d}-d)^2 \,|\, \bar{S}_{d,t}]\P[\bar{S}_{d,t}] \\
& \leq & \delta^2 \E [ d^2 \,|\, \Sc_{d,t}]\P[\Sc_{d,t}] + (1+\delta)^2 t^2 q \\
& \leq & \delta^2 \E[d^2] + \frac{(1+\delta)^2 q}{1-q} \E[d^2].
\end{eqnarray*}

\section{Proof of Lemma~\ref{lemma:decomposition}} \label{plemma:decomposition}

First, suppose that $x\in \Sc_{d,t^*}$.
We have
\begin{eqnarray*}
|\hat{d}(\xv)-d(\xv)| & \leq & \sum_j |\hat{c}_j(\xv)-c_j(\xv)| \\
& \leq & \sum_{j: c_j(\xv) \geq \lambda} |\hat{c}_j(\xv)-c_j(\xv)| + \sum_{j: c_j(\xv) < \lambda} |\hat{c}_j(\xv)-c_j(\xv)|
\end{eqnarray*}
Consider the first partial sum, over $j$ such that $c_j(\xv) \geq \lambda$. By assumption each term in the sum is bounded by $\delta/2 c_j(\xv)$, and so the partial sum is upper bounded by $\delta/2 d(\xv)$.
Considering the second partial sum, since $0\leq c_j(\xv)<\lambda$ and $0\leq \hat{c}_j(\xv) \leq \lambda(1+\delta)$, each term is upper bounded by $\lambda(1+\delta)$, and the partial sum upper-bounded by $J\lambda(1+\delta) = t^*\delta/2$.
Since $d(\xv) \geq t^*$, the sum is upperbounded by $\delta/2 d(\xv)$.
Putting both sums together, $|\hat{d}(\xv)-d(\xv)| \leq \delta d(\xv)$.
Thus $\hat{d}$ has the required relative accuracy on $\Sc_{d,t^*}$.

Now, suppose $\xv\not\in \Sc_{d,t^*}$, i.e. $t^* > d(\xv) \geq c_j(\xv)$. If $c_j(\xv) < \lambda$, then $\hat{c}_j(\xv) \leq \lambda(1+\delta) \leq t^*\delta/(2J)$.  If $c_j(\xv) \geq \lambda$, then $\hat{c}_j(\xv) \leq (1+\delta/2) c_j(\xv)$. Putting both together, we have
\begin{eqnarray*}
\hat{c}_j(\xv) & \leq & (1+\delta/2)c_j(\xv) + \frac{t^*\delta}{2J} \\
\sum_j \hat{c}_j(\xv) & \leq & (1+\delta/2)d(\xv) + \frac{t^*\delta}{2} \\
 & \leq & (1+\delta)t^*
\end{eqnarray*}

\section{Lemmas supporting Theorem~\ref{theor:main_two_layer}}\label{sec:supporting-lemmas}

In this section, we state and prove some technical lemmas used in the proof of Theorem \ref{theor:main_two_layer}.

\begin{lemma}\label{lemma:first_layer}
Given range $R>0$ and desire accuracy $\nu > 0$, let $h(x,a)$, defined in \eqref{eq:def-h}, be constructed from an activation function $\sigma$ satisfying Assumptions~\ref{assum:2nd_derivative} and \ref{assum:monotonic_sum}, with $r$, $M$, and $\tau$ as defined in those assumptions.
Choose $a$ satisfying $a \geq 2R/r$, $a \geq 8MR^3/(3\nu \sigma^{(2)}(\tau))$ and $a \geq 4MR/(3\sigma^{(2)}(\tau))$.
Then
\begin{itemize}
\item $h(x,a) \geq 0$ for all $x$, and
\item $|h(x,a)-x^2|<\nu$ for $|x|\leq 2R$, and
\item $h(x,a) \geq 4R^2-\nu$ for $|x|> 2R$.
\end{itemize}
\end{lemma}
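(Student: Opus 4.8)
The plan is to exploit the structure of $h(x,a)$ from \eqref{eq:def-h}: up to the positive normalizing factor $a^2/\sigma^{(2)}(\tau)$, it is the \emph{symmetric} second difference $\sigma(\tau + x/a) + \sigma(\tau - x/a) - 2\sigma(\tau)$ of the activation about the curvature point $\tau$, with step size $x/a$. The symmetrization is the crux: it cancels the linear term in the Taylor expansion of $\sigma$ about $\tau$, so that the discrepancy with $x^2$ is controlled purely by the third-derivative bound of Assumption~\ref{assum:2nd_derivative}, while Assumption~\ref{assum:monotonic_sum} will hand over the sign and growth properties almost immediately. I would organize the argument around the three bullets.

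First, for non-negativity (and the monotonicity I will reuse for the third bullet), set $\phi(u) \triangleq \sigma(\tau+u)+\sigma(\tau-u)$ and note that $\phi$ is even and, by Assumption~\ref{assum:monotonic_sum}, non-decreasing on $[0,\infty)$; hence its global minimum is $\phi(0)=2\sigma(\tau)$, so $\phi(u)-2\sigma(\tau)\ge 0$ for every $u$. Since $\sigma^{(2)}(\tau)>0$ and $a^2>0$, this gives $h(x,a)\ge 0$ for all $x$, and the same observation shows $x\mapsto h(x,a)$ is even and non-decreasing in $|x|$.

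Second, for the accuracy bound on $|x|\le 2R$, write $u=x/a$, use $a\ge 2R/r$ to guarantee $|u|\le r$ so that Taylor's theorem with Lagrange remainder applies at $\tau\pm u$, and add the two third-order expansions. The constant and linear terms cancel, leaving $\phi(u)-2\sigma(\tau)=\sigma^{(2)}(\tau)u^2+\tfrac{1}{6}\bigl(\sigma^{(3)}(\xi_+)-\sigma^{(3)}(\xi_-)\bigr)u^3$ for some $\xi_\pm\in(\tau-r,\tau+r)$; multiplying by $a^2/\sigma^{(2)}(\tau)$ and using $au=x$ yields $h(x,a)=x^2+\tfrac{x^3}{6a\sigma^{(2)}(\tau)}\bigl(\sigma^{(3)}(\xi_+)-\sigma^{(3)}(\xi_-)\bigr)$, so $|h(x,a)-x^2|\le \tfrac{M|x|^3}{3a\sigma^{(2)}(\tau)}\le \tfrac{8MR^3}{3a\sigma^{(2)}(\tau)}$ for $|x|\le 2R$ by Assumption~\ref{assum:2nd_derivative}. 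The two remaining conditions $a\ge 8MR^3/(3\nu\sigma^{(2)}(\tau))$ and $a\ge 4MR/(3\sigma^{(2)}(\tau))$ are precisely what is needed to push this single cubic remainder below $\nu$ (and also below $2R^2$), which is the second bullet. For the third bullet, if $|x|>2R$ then the monotonicity from step one gives $h(x,a)\ge h(2R,a)$, and the case $x=2R$ of the bound just obtained gives $h(2R,a)>4R^2-\nu$.

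I do not anticipate a real obstacle; the only care needed is the constant bookkeeping — verifying that the three hypotheses on $a$ jointly keep the lone error term $8MR^3/(3a\sigma^{(2)}(\tau))$ small both relative to $\nu$ (for the accuracy claim) and relative to $R^2$ (so that $h(2R,a)$ stays comfortably positive and the far-field bound remains meaningful even when $\nu$ is not small). All the conceptual content sits in the two elementary facts highlighted above: symmetrization kills the linear Taylor term, and monotonicity of the symmetrized activation supplies non-negativity, evenness, and $|x|$-monotonicity of $h$.
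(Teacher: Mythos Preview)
Your proposal is correct and follows essentially the same route as the paper: a Taylor expansion of the symmetrized activation to control $|h(x,a)-x^2|$ on $|x|\le 2R$, combined with the monotonicity from Assumption~\ref{assum:monotonic_sum} to handle $|x|>2R$. Your treatment of non-negativity is in fact slightly cleaner than the paper's --- you obtain $h\ge 0$ globally from Assumption~\ref{assum:monotonic_sum} alone, whereas the paper derives it on $|x|\le 2R$ from the Taylor remainder together with the third constraint $a\ge 4MR/(3\sigma^{(2)}(\tau))$ and then extends via monotonicity.
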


\begin{proof}
 From Assumption~\ref{assum:2nd_derivative}, for $|x-\tau|\leq r$,  $\sigma$ has three derivatives and in this neighborhood, there is an $M'=2M/\sigma^{(2)}(\tau)$ such that $|\sigma^{(3)}(x)| \leq M = M'\sigma^{(2)}(\tau)/2$. Then in the interval $|x|<ar$,  $h(x,a)$ has three derivatives and the third derivative is bounded by $M'/a$. Moreover,  $h(0,a) = h'(0,a) = 0$, and the 2nd-order Taylor polynomial of $h(x,a)$ at zero is simply $T_2(x) = x^2$. From Taylor's theorem, it follows that
$$
|h(x,a)-x^2| \leq \frac{M'}{3! a }|x|^3
$$
on $|x|\leq ar$.

Now, $|x| \leq 2R$ implies $|x| \leq ar$. Using Taylor's theorem and the second constraint on $a$, we have
$$
|h(x,a)-x^2)| \leq \frac{M'\cdot 3\nu}{6 \cdot 4M'R^3}|x|^3 \leq \nu
$$
as desired.
Also, on this interval, using Taylor's theorem and the third constraint on $a$, we have
$$
h(x,a) \geq |x|^2- \frac{M'\cdot 3}{6 \cdot 2M'R}|x|^3 = |x|^2(1-|x|/(4R)) \geq (1/2)|x|^2 \geq 0
$$
showing $h$ is non-negative on this interval.

To show $h(x,a) \geq 4R^2-\nu$ for $|x| > 2R$, we rely on Assumption~\ref{assum:monotonic_sum}, which implies that $h(x,a)$ is monotonically increasing for $x\geq 0$. For $|x| > 2R$, we have $h(x,a) = h(|x|,a) \geq h(2R,a) \geq 4R^2-\nu$.
\end{proof}

Summing the outputs of $n$ supernodes, we obtain $\hat{g}(\xv) = \sum_{i=1}^n h(x_i,a)$, which is an approximation to $g(\xv)$,  as expressed in the following corollary.

\begin{corollary}\label{cor:first_layer_g}
Given a function $h(x,a)$ and associated range $R>0$ and accuracy $\nu>0$ as defined in Lemma~\ref{lemma:first_layer}, the function
$\hat{g}(\xv) = \sum_{i=1}^n h(x_i,a)$
satisfies
\begin{itemize}
\item $\hat{g}(\xv)\geq 0$
\item $|\hat{g}(\xv)-g(\xv)| \leq n\nu$ when $g(\xv) \leq 4R^2$
\item $\hat{g}(\xv) \geq 4R^2-n\nu$ when $g(\xv) > 4R^2$
\end{itemize}
\end{corollary}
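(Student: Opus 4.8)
The plan is to derive all three bullets of the corollary directly from the corresponding three properties of the single-variable approximant $h(x,a)$ supplied by Lemma~\ref{lemma:first_layer}, exploiting only the additive structure of both $g(\xv)=\sum_{i=1}^n x_i^2$ and $\hat{g}(\xv)=\sum_{i=1}^n h(x_i,a)$. Non-negativity is immediate: since $h(x,a)\geq 0$ for every $x$, summing over the $n$ coordinates gives $\hat{g}(\xv)\geq 0$.

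For the accuracy claim, I would note that $g(\xv)\leq 4R^2$ forces $x_i^2\leq 4R^2$, hence $|x_i|\leq 2R$, for each coordinate $i$. On this range the second bullet of Lemma~\ref{lemma:first_layer} gives $|h(x_i,a)-x_i^2|<\nu$, and a triangle inequality over the $n$ coordinates yields $|\hat{g}(\xv)-g(\xv)|\leq\sum_{i=1}^n|h(x_i,a)-x_i^2|\leq n\nu$.

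For the lower bound when $g(\xv)>4R^2$, I would split into two cases according to whether some coordinate is large. If there is an index $i$ with $|x_i|>2R$, then the third bullet of Lemma~\ref{lemma:first_layer} gives $h(x_i,a)\geq 4R^2-\nu$, and discarding the remaining non-negative terms gives $\hat{g}(\xv)\geq 4R^2-\nu\geq 4R^2-n\nu$. Otherwise every coordinate satisfies $|x_i|\leq 2R$, so the second bullet gives $h(x_i,a)>x_i^2-\nu$ for each $i$, and summing yields $\hat{g}(\xv)>g(\xv)-n\nu>4R^2-n\nu$.

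The argument is entirely elementary, so there is no genuine obstacle; the only point requiring a moment's care is the case split in the last bullet — one must not assume the excess of $g$ over $4R^2$ is concentrated in a single dominant coordinate, but instead handle the "all coordinates small" case separately, where the bound is inherited from $g$ rather than from the large-argument behavior of $h$.
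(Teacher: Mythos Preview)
Your proposal is correct and is essentially identical to the paper's own proof: the paper also derives non-negativity immediately, uses $g(\xv)\leq 4R^2 \Rightarrow |x_i|\leq 2R$ plus the triangle inequality for the second bullet, and for the third bullet performs the same case split on whether some $|x_i|>2R$ (invoking the large-argument bound plus non-negativity) or all $|x_i|\leq 2R$ (summing $h(x_i,a)\geq x_i^2-\nu$). Your closing caveat about the necessity of the case split is exactly the one subtlety the paper's argument also handles.
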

\begin{proof}The first statement is trivial.
To see the second statement, suppose $g(\xv)\leq 4R^2$. Then $|x_i| \leq 2R$ for each $i$ and $|h(x_i,a)-x_i^2| \leq \nu$ for each $i$, yielding the result. For the second statement, suppose $g(\xv) > 4R^2$. If $|x_i| > 2R$ for some $i$, then $h(x_i,a) \geq 4R^2-\nu$, and non-negativity of $h$ implies $\hat{g}(\xv) \geq 4R^2-\nu \geq 4R^2-n\nu$. On the other hand, if $|x_i|< 2R$ for all $i$, then $h(x_i,a) \geq x_i^2-\nu$ for all $i$, and so $\hat{g}(\xv) \geq g(\xv)-n\nu$.
\end{proof}

\begin{lemma}\label{lemma:second_layer}
Given an activation function $\sigma$, defined in \eqref{eq:def-Psi},  satisfying Assumption~\ref{assum:exp_decay} with $\eta=1$,   a range $T > 0$ and accuracy $\epsilon>0$, construct the supernode function $\psi(x,\sigma)$ using $K$ basic nodes, where $K$ is chosen so that $\Delta = T/K$ satisfies   $\Delta \leq \log(1+\epsilon/40)$ and $\Delta < 1/2$.
Then this function satisfies
\begin{itemize}
\item $|\psi(x,\sigma)-\exp(-x)| \leq \epsilon \exp(-x)$ for $0\leq x \leq T$
\item $0 \leq \psi(x,\sigma) \leq \exp(-T)(1+\epsilon)$ for $x \geq T$
\end{itemize}
\end{lemma}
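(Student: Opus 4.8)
The plan is to realize $\psi(x,\sigma)$ as a ``smoothed staircase.'' Writing $c_k\triangleq e^{-(k-1)\Delta}-e^{-k\Delta}=e^{-(k-1)\Delta}(1-e^{-\Delta})>0$, the construction \eqref{eq:def-Psi} should be a telescoping superposition $\psi(x,\sigma)=e^{-T}+\sum_{k=1}^{K}c_k\,\sigma(y_k(x))$, where $y_k(x)=\kappa(k\Delta-x)/\Delta$ (for a fixed scale $\kappa=\Theta(1)$) is affine and strictly decreasing in $x$, large and positive when $x\ll k\Delta$, large and negative when $x\gg k\Delta$, and zero at $x=k\Delta$; the scaling by $1/\Delta$ makes each transition of $\sigma(y_k(\cdot))$ occur over an $x$-window of width $O(\Delta)$, so that only $O(1)$ grid points lie in a transition region at once. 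If every $\sigma(y_k)$ were the sharp indicator $\mathbf{1}\{x<k\Delta\}$, this sum would equal $\bar\psi(x)\triangleq e^{-\Delta\lfloor x/\Delta\rfloor}$ on $[0,T)$ and $e^{-T}$ on $[T,\infty)$. I would first record two elementary facts: (a) on $[0,T)$, $e^{-x}\le\bar\psi(x)\le e^{\Delta}e^{-x}$, hence $|\bar\psi(x)-e^{-x}|\le(e^{\Delta}-1)\,e^{-x}\le(\epsilon/40)\,e^{-x}$ by $\Delta\le\log(1+\epsilon/40)$; and (b) $\bar\psi\equiv e^{-T}$ on $[T,\infty)$. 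It then remains to control the smoothing error $\psi(x,\sigma)-\bar\psi(x)=\sum_{k}c_k\bigl(\sigma(y_k(x))-\mathbf{1}\{x<k\Delta\}\bigr)$.

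For the first bullet, fix $x\in[0,T]$ and split the sum over $k$ into the $O(1)$ grid points whose transition windows contain $x$ and the remaining ``far'' ones. On the far part, Assumption~\ref{assum:exp_decay} with $\eta=1$ gives $|\sigma(y_k(x))-\mathbf{1}\{x<k\Delta\}|\le e^{-|y_k(x)|}$, which decays geometrically in the index distance from the active step near $\lceil x/\Delta\rceil$; since the weights $c_k$ themselves decay geometrically away from that step, the far contributions form a convergent geometric series summing to $O(\Delta)\,e^{-x}$, with $\Delta<1/2$ used to keep the common ratio bounded away from $1$ and all $e^{\Delta}$ factors bounded. On the near part there are $O(1)$ terms, each bounded by a constant times $c_k\le O(\Delta)\,e^{-x}$ (using $|\sigma(y)|\le e^{\eta y}$ on a bounded window), so this contribution is again $O(\Delta)\,e^{-x}$. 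Combining with fact (a), the total error is $O(\Delta)\,e^{-x}$, and a careful accounting of the implied constants --- using that the partly-``on'' step just above $x$ and the partly-``off'' step just below $x$ contribute errors of opposite sign whose leading terms largely cancel --- shows that under $\Delta\le\log(1+\epsilon/40)$ and $\Delta<1/2$ the total is at most $\epsilon\,e^{-x}$. The particular constant $40$ is the slack this accounting consumes.

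For the second bullet, let $x\ge T$. Then $k\Delta\le T\le x$ for every $k$, so $y_k(x)\le 0$ and Assumption~\ref{assum:exp_decay} gives $0\le\sigma(y_k(x))\le e^{y_k(x)}\le e^{-\kappa(x-T)/\Delta}$; together with $\sum_{k=1}^{K}c_k=1-e^{-T}<1$ this yields $0\le\psi(x,\sigma)-e^{-T}\le(1-e^{-T})\,e^{-\kappa(x-T)/\Delta}$. At $x=T$ the geometric sum is dominated by its $k=K$ term, so the gap is at most a constant multiple of $\Delta\,e^{-T}$, which is $\le\epsilon\,e^{-T}$ under $\Delta\le\log(1+\epsilon/40)$; and for $x>T$ the bound only improves, giving $0\le\psi(x,\sigma)\le e^{-T}(1+\epsilon)$. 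For $0\le x\le T$, nonnegativity follows because the terms with $y_k(x)$ large and positive are present (since $x\le K\Delta$) and satisfy $\sigma(y_k(x))\ge 1-e^{-y_k(x)}>0$, dominating the exponentially small negative excursions of the remaining terms, with $e^{-T}>0$ as further cushion.

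The step I expect to be the genuine obstacle, rather than routine calculation, is the near-transition estimate in the first bullet. Near a grid point a smoothed step is only partly ``on,'' so the error from that single term is a full $\Theta(c_k)=\Theta(\Delta\,e^{-x})$; the estimate closes only because $c_k$ is a small fraction of the value $e^{-x}$ being approximated and because the over-shoot at the step just below $x$ and the under-shoot at the step just above it have opposite signs and largely cancel (equivalently, because the input scaling confines each transition to an $O(\Delta)$ window, so that $O(1)$ rather than $\Theta(1/\Delta)$ grid points are simultaneously in transition). Pinning down the scale $\kappa$ and verifying that all the $O(\Delta)$ and $(e^{\Delta}-1)$ error terms really do fit inside the budget $\epsilon$ is exactly what the hypotheses $\Delta\le\log(1+\epsilon/40)$ and $\Delta<1/2$ are calibrated to ensure.
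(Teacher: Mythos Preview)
Your overall plan --- compare $\psi(x,\sigma)$ to the exact staircase $\psi(x,u)$ built from the indicator $u=\mathbf{1}\{\cdot\ge 0\}$, bound $|\psi(x,u)-e^{-x}|\le (e^{\Delta}-1)e^{-x}$, and then control the smoothing error using $|\sigma(z)-u(z)|\le e^{-|z|}$ from Assumption~\ref{assum:exp_decay} --- is precisely the paper's approach. Two corrections are in order, however.

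First, your rewriting of \eqref{eq:def-Psi} flips the sign of the argument: the actual construction is $\psi(x,\sigma)=1-\sum_{k}c_k\,\sigma\bigl((x-k\Delta)/\Delta\bigr)$, with no free scale $\kappa$. Your form $e^{-T}+\sum_{k}c_k\,\sigma\bigl((k\Delta-x)/\Delta\bigr)$ coincides with this only after the substitution $\sigma\mapsto 1-\sigma(-\,\cdot\,)$, which does preserve Assumption~\ref{assum:exp_decay}, so the slip is harmless once made explicit; but $\kappa$ should simply be $1$.

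Second, and more importantly, you have misidentified the obstacle. No sign cancellation between adjacent steps is needed, and no near/far split is needed. The paper bounds the smoothing error term by term,
\[
|\psi(x,\sigma)-\psi(x,u)|\;\le\;(e^{\Delta}-1)\sum_{k=1}^{K} e^{-|x/\Delta-k|-k\Delta}
\;\le\;(e^{\Delta}-1)\sum_{k\in\mathbb{Z}} e^{-|x/\Delta-k|-k\Delta},
\]
and splits the doubly infinite sum at $m=\lfloor x/\Delta\rfloor$ into two geometric series with ratios $e^{-(1\pm\Delta)}$. The hypothesis $\Delta<1/2$ keeps both ratios bounded away from $1$ and yields a uniform constant $C=2e/(1-e^{-1/2})\approx 13.8$, so the smoothing error is at most $(e^{\Delta}-1)\,C\,e^{-x}\le (\epsilon/40)\,C\,e^{-x}\le (\epsilon/2)\,e^{-x}$. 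Adding the staircase error $\le(\epsilon/40)e^{-x}$ already gives the first bullet. The very same bound, together with $\psi(x,u)\equiv e^{-T}$ for $x\ge T$, gives $|\psi(x,\sigma)-e^{-T}|\le(\epsilon/2)e^{-x}\le(\epsilon/2)e^{-T}$, which is both the upper bound and the nonnegativity in the second bullet. The ``partial-on/partial-off cancellation'' you anticipate is therefore an unnecessary detour: the smallness is carried entirely by the prefactor $e^{\Delta}-1$, and the constant $40$ was chosen with enough slack to absorb $C$ without any cancellation.
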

\begin{proof}
 Note that if $\sigma(x)$ is an indicator function $u(x) := 1\{x\geq 0\}$, this construction simply gives a piecewise-constant step approximation to $\exp(-x)$ over the interval $[0,T]$. The relative accuracy of such an approximation is uniform over the interval when using steps of fixed width $\Delta$.In particular, in the interval $k\Delta \leq x \leq (k+1)\Delta$, with $k\leq K$, the function $\psi(x,u)$ is equal to $\exp(-k\Delta)$, and the worst-case relative error for $x\in[0,T]$ is $(\exp(-k\Delta)-\exp(-(k+1)\Delta)/\exp(-(k+1)\Delta) = \exp(\Delta)-1$. Then $\Delta \leq \log(1+\epsilon/40)$, ensures that the relative error between $\psi(x,u)$ and $\exp(-x)$ is no more than $\epsilon/40$ on $[0,T]$.

For a general activation function we can write
$$
|\psi(x,\sigma)-\exp(-x)| \leq |\psi(x,u)-\exp(-x)| + |\psi(x,\sigma)-\psi(x,u)|.
$$
The relative accuracy of $\psi(x,\sigma)$ is thus assured if we can show e.g. that $|\psi(x,\sigma)-\psi(x,u)| \leq \epsilon/2 \exp(-x)$ on $[0,T]$.
By virtue of Assumption~\ref{assum:exp_decay}, we know that $|\sigma(x)-u(x)| \leq \exp(-|x|)$. Hence we can write
$$
|\psi(x,\sigma)-\psi(x,u)|  \leq \left(e^\Delta-1\right) \sum_{k=1}^K e^{-|x/\Delta-k|-k\Delta}.
$$

The sum can be further upperbounded by summing over all $-\infty < k < \infty$. Denoting $m = \lfloor x/\Delta \rfloor$ and $\nu = x/\Delta-m$, we can break the sum into ranges $k\leq m$ and $k>m$. The first range gives
\begin{eqnarray*}
\sum_{k\leq m} e^{-m-\nu+(1-\Delta)k} & \leq & e^{-\nu-\Delta m} \sum_{j\geq 0}  e^{-(1-\Delta)j} \\
& \leq & \frac{e^{-x+\Delta}}{1-e^{-(1-\Delta)}} = e^{-x}\frac{e^{\Delta}}{1-e^{-(1-\Delta)}} .
\end{eqnarray*}
Likewise, the second range gives
\begin{eqnarray*}
\sum_{k > m} e^{m+\nu-(1+\Delta)k} & \leq & e^{-(1+\Delta)+\nu-\Delta m} \sum_{j\geq 0}  e^{-(1+\Delta)j} \\
& \leq & e^{-x} \frac{ e^{1} }{1-e^{-(1+\Delta)}} \leq e^{-x} \frac{ e^{1} }{1-e^{-1}}
\end{eqnarray*}
Using $\Delta < 1/2$, we can combine terms to get
$$
|\psi(x,\sigma)-\psi(x,u)|  \leq \left(e^\Delta-1\right) \frac{2e^{1}}{1-e^{-1/2}} e^{-x}
$$
Since $\Delta \leq  \log(1+(\epsilon/40)) \leq \log(1+(\epsilon/2)(1-e^{-1/2})/(2e)) $,  we obtain $|\psi(x,\sigma)-\psi(x,u)|  \leq \epsilon/2 e^{-x}$, which completes the proof of relative accuracy on $[0,T]$.

To show that $\psi(x,\sigma) \leq \exp(-T)(1+\epsilon)$ for $x\geq T$, we note that $\psi(x,u)  = exp(-T)$ for $x\geq T$. We have already shown that $|\psi(x,\sigma)-\psi(x,u)| \leq \epsilon/2 e^{-x}$ for all $x$, and hence $|\psi(x,\sigma)-\psi(x,u)| \leq \epsilon/2 e^{-T}$ for $x\geq T$ in particular.
\end{proof}

\begin{lemma}\label{lemma:composed_accuracy}
Given desired accuracy $0<\delta<2$ and level $0 < \lambda < 1/(1+{\delta\over 4})$,
\begin{itemize}
\item Let the function $\hat{g}(\xv) = \sum_{i=1}^n h(x_i,a)$ be the function defined in Corollary~\ref{cor:first_layer_g} with range parameter $R=\sqrt{\log(1/\lambda)}$ and accuracy parameter $\nu = 1/n \log(1+\delta/4)$.
\item Let $\psi(x,\sigma)$ be a function satisfying conditions of Lemma~\ref{lemma:second_layer}
with range parameter $T = 4R^2$ and accuracy $\epsilon = \delta/2$.
\item Define $\hat{c}(\xv) = \psi(\hat{g}(\xv),\sigma)$.
\end{itemize}
 Then
\begin{itemize}
\item $|\hat{c}(\xv)-c(\xv)| < \delta c(\xv)$ whenever $c(\xv)\geq \lambda$
\item $\hat{c}(\xv) < \lambda (1+\delta)$ whenever $c(\xv) < \lambda$.
\end{itemize}
\end{lemma}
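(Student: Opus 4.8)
Throughout, write $c(\xv) = \exp(-g(\xv))$ with $g(\xv)=\sum_{i=1}^{n} x_i^2 \ge 0$, so that the hypotheses $c(\xv)\ge\lambda$ and $c(\xv)<\lambda$ are exactly $g(\xv)\le R^{2}$ and $g(\xv)>R^{2}$, where $R^{2}=\log(1/\lambda)$; in particular $\exp(-R^{2})=\lambda$ and $\exp(-T)=\exp(-4R^{2})=\lambda^{4}$. The plan is to account for two multiplicative error sources separately — the error of $\hat g$ relative to $g$, controlled by Corollary~\ref{cor:first_layer_g}, and the error of $\psi$ relative to $\exp(-\,\cdot\,)$, controlled by Lemma~\ref{lemma:second_layer} — and then check that their product stays inside the window $[1-\delta,\,1+\delta]$. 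The parameter $\nu=\tfrac1n\log(1+\delta/4)$ is chosen precisely so that $n\nu=\log(1+\delta/4)$, i.e.\ $e^{\pm n\nu}=(1+\delta/4)^{\pm1}$, and the hypothesis $\lambda<1/(1+\delta/4)$ guarantees $R^{2}=\log(1/\lambda)>\log(1+\delta/4)=n\nu>0$, a fact used repeatedly to keep $\hat g(\xv)$ in the ranges where the two supporting results apply.

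First I would treat the case $c(\xv)\ge\lambda$, i.e.\ $g(\xv)\le R^{2}<4R^{2}$. Corollary~\ref{cor:first_layer_g} gives $\hat g(\xv)\ge 0$ and $|\hat g(\xv)-g(\xv)|\le n\nu$, hence $0\le \hat g(\xv)\le R^{2}+n\nu<2R^{2}\le T$, so Lemma~\ref{lemma:second_layer} applies and yields $|\psi(\hat g(\xv),\sigma)-e^{-\hat g(\xv)}|\le \tfrac{\delta}{2}e^{-\hat g(\xv)}$. Meanwhile $|\hat g(\xv)-g(\xv)|\le n\nu$ gives $e^{-\hat g(\xv)}=c(\xv)\,e^{g(\xv)-\hat g(\xv)}\in\big[c(\xv)/(1+\delta/4),\,(1+\delta/4)c(\xv)\big]$. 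Multiplying the two estimates, $\hat c(\xv)=\psi(\hat g(\xv),\sigma)$ lies between $\tfrac{1-\delta/2}{1+\delta/4}\,c(\xv)$ and $(1+\tfrac{\delta}{2})(1+\tfrac{\delta}{4})\,c(\xv)$; an elementary check shows $(1+\tfrac{\delta}{2})(1+\tfrac{\delta}{4})=1+\tfrac34\delta+\tfrac18\delta^{2}\le 1+\delta$ iff $\delta\le2$, while $\tfrac{1-\delta/2}{1+\delta/4}\ge 1-\delta$ for every $\delta\ge0$, so $|\hat c(\xv)-c(\xv)|<\delta\,c(\xv)$ when $\delta<2$.

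Next I would treat $c(\xv)<\lambda$, i.e.\ $g(\xv)>R^{2}$. From Corollary~\ref{cor:first_layer_g}, if $g(\xv)\le 4R^{2}$ then $\hat g(\xv)\ge g(\xv)-n\nu>R^{2}-n\nu$, and if $g(\xv)>4R^{2}$ then $\hat g(\xv)\ge 4R^{2}-n\nu>R^{2}-n\nu$; in either case $\hat g(\xv)>R^{2}-n\nu>0$. If moreover $\hat g(\xv)\le T$, the first bullet of Lemma~\ref{lemma:second_layer} gives $\psi(\hat g(\xv),\sigma)\le (1+\tfrac{\delta}{2})e^{-\hat g(\xv)}<(1+\tfrac{\delta}{2})e^{-(R^{2}-n\nu)}=(1+\tfrac{\delta}{2})(1+\tfrac{\delta}{4})\lambda\le(1+\delta)\lambda$; if instead $\hat g(\xv)>T$, the second bullet gives directly $\psi(\hat g(\xv),\sigma)\le (1+\tfrac{\delta}{2})e^{-T}=(1+\tfrac{\delta}{2})\lambda^{4}<(1+\delta)\lambda$ since $0<\lambda<1$. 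Thus $\hat c(\xv)<(1+\delta)\lambda$ in all cases.

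The arithmetic is routine; the step I would be most careful about is the bookkeeping of the compounded multiplicative error so that it lands inside $1\pm\delta$ — this is exactly where $\delta<2$ is used, and where the calibration $\epsilon=\delta/2$ in Lemma~\ref{lemma:second_layer} together with the extra slack factor $(1+\delta/4)$ built into $\nu$ pays off. A secondary point worth stating cleanly is that $\hat g(\xv)$ never lands in a gap outside the domains of Lemma~\ref{lemma:second_layer}: non-negativity of $\hat g$ from Corollary~\ref{cor:first_layer_g} together with $R^{2}>n\nu$ rules this out. (Feeding the resulting $\hat c$ into Lemma~\ref{lemma:decomposition}, after rescaling $\delta$ and $\lambda$ and restoring the factor $\beta_j$, is what then yields Theorem~\ref{theor:main_two_layer}.)
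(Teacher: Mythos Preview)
Your argument is correct and follows the same route as the paper's proof: you split on $c(\xv)\gtrless\lambda$, invoke Corollary~\ref{cor:first_layer_g} to pin down $\hat g(\xv)$ (in particular to land it in $[0,T]$ or $[T,\infty)$), then apply the appropriate clause of Lemma~\ref{lemma:second_layer} and combine. The only cosmetic difference is that in the case $c(\xv)\ge\lambda$ you bracket $\hat c/c$ multiplicatively as $[\tfrac{1-\delta/2}{1+\delta/4},(1+\tfrac{\delta}{2})(1+\tfrac{\delta}{4})]$, whereas the paper uses the triangle inequality $|\hat c-c|\le|\hat c-e^{-\hat g}|+|e^{-\hat g}-c|$; both reduce to the same arithmetic and both use $\delta<2$ at the same spot.
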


\begin{proof}
First, let us suppose that $c(\xv) \geq \lambda$. This implies $g(\xv) \leq R^2 < 4R^2$. Hence, by Corollary~\ref{cor:first_layer_g}, $|\hat{g}(\xv)-g(\xv)| \leq n\nu = \log(1+\delta/4)$, and so $|\exp(-\hat{g}(\xv))-c(\xv)| \leq \delta/4\, c(\xv)$. Moreover, since $\hat{g}(\xv) \leq g(\xv)+\log(1+\delta/4)$,  $g(\xv)\leq R^2$, and $\log(1+\delta/4)\leq \log{1\over \lambda} \leq R^2$,   we have $\hat{g}(\xv)\leq  2 R^2 \leq  T$.
Thus by Lemma~\ref{lemma:second_layer}, $|\hat{c}(\xv)-\exp(-\hat{g}(\xv))| = |\psi(\hat{g}(\xv),\sigma)-\exp(-\hat{g}(\xv))| \leq \delta/2 \exp(-\hat{g}(\xv))$.
Thus
\begin{eqnarray*}
|\hat{c}(\xv)-c(\xv)| & \leq & |\hat{c}(\xv)-\exp(-\hat{g}(\xv))|+|\exp(-\hat{g}(\xv))-c(\xv)|\\
& \leq& {\delta\over 4} \,c(\xv) + {\delta\over 2} \exp(-\hat{g}(\xv)) \\
& \leq & {\delta\over 4}  c(\xv) + {\delta \over 2}(1+{\delta \over 4})c(\xv)  \\
& \leq & \delta\, c(\xv),
\end{eqnarray*}
where in the last step we use $\delta < 2$.

Secondly, suppose that $c(\xv) < \lambda$, so that $g(\xv) > R^2$.  Whether or not $g(\xv) \geq 4R^2$, Corollary~\ref{cor:first_layer_g} implies that $\hat{g}(\xv) \geq R^2 - n\nu = -\log(\lambda(1+\delta/4))$, so that $\exp(-\hat{g}(\xv)) \leq \lambda(1+\delta/4)$.
If $\hat{g}(\xv) \leq T$, then (second layer result, Lemma~\ref{lemma:second_layer}) gives $\psi(\hat{g}(\xv)) \leq \epsilon\, \exp(-\hat{g}(\xv)) \leq \lambda(1+\delta/4)\delta/2 \leq \lambda (1+\delta)$ assuming $\delta < 4$. Or, if $\hat{x}(\xv) > T$, Lemma~\ref{lemma:second_layer} gives that $\psi(\hat{g}(\xv)) \leq \exp(-T)(1+\delta/2) \leq \exp(-R^2)(1+\delta/2) \leq \lambda(1+\delta)$ as desired.
\end{proof}

\begin{lemma}\label{lemma:chi-squared}
Let $\xv$ be a multi-dimensional Gaussian random variable in $\Real^n$ with pdf $p(\xv)$,
and suppose that $\Var(X_i) \leq V$ for each component.
Then
$$
P\left[ p(\xv) < t\right] \leq t^4 \exp\left(\frac{n}{8}\log(32\pi V)\right)
$$
\end{lemma}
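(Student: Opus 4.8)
The plan is to read $\{p(\xv)<t\}$ as a large-deviation event for the Mahalanobis quadratic form of $\xv$, convert it into a chi-squared tail, and estimate that tail by an exponential-moment bound, using the hypothesis $\Var(X_i)\le V$ only through the determinant of the covariance. Writing $\xv\sim\Nc(\muv,\Sigma)$, the density factors as $p(\xv)=C_n\exp(-\tfrac12 Q(\xv))$ with peak value $C_n=(2\pi)^{-n/2}(\det\Sigma)^{-1/2}$ and $Q(\xv)=(\xv-\muv)^{T}\Sigma^{-1}(\xv-\muv)$. Hence $\{p(\xv)<t\}$ is exactly $\{Q(\xv)>2\log(C_n/t)\}$, and since $\Sigma^{-1/2}(\xv-\muv)\sim\Nc(\mathbf{0}_n,I_n)$ under $p$, the form $Q(\xv)$ is $\chi^2_n$-distributed; this is the chi-squared structure the lemma is named for. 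The entire problem therefore reduces to bounding $P[\chi^2_n>u]$ at threshold $u=2\log(C_n/t)$.

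Next I would make the threshold explicit. The component variances are the diagonal entries of $\Sigma$, so $\Sigma_{ii}=\Var(X_i)\le V$, and Hadamard's inequality gives $\det\Sigma\le\prod_i\Sigma_{ii}\le V^{n}$. Consequently $C_n\ge(2\pi V)^{-n/2}$, so $u\ge u_0:=-n\log(2\pi V)-2\log t$. Because the chi-squared tail is decreasing in its threshold, $P[\chi^2_n>u]\le P[\chi^2_n>u_0]$, and it suffices to control the right-hand side, in which $t$ and $V$ now appear explicitly and the covariance has been eliminated.

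I would then estimate $P[\chi^2_n>u_0]$ using an exponential (Chernoff-type) bound built from the closed form $\E[e^{\theta\chi^2_n}]=(1-2\theta)^{-n/2}$ and substitute $u_0$. The $-2\log t$ part of $u_0$ contributes a power of $t$, while the $-n\log(2\pi V)$ part merges with the $(1-2\theta)^{-n/2}$ factor into a single exponential base in $n$; collecting the numerical constants (the $2^{n/2}$ coming from the moment generating function together with the $(2\pi V)$ from Hadamard) into one base yields an estimate of the shape $t^{s}\exp(\tfrac n8\log(32\pi V))$. The elementary companion identity $\int p(\xv)^{1-\gamma}\,d\xv$, a Gaussian integral available in closed form, furnishes the same family of bounds $P[p(\xv)<t]\le t^{\gamma}\int p^{1-\gamma}\,d\xv$ and lets one read the $V$-dependence directly off $\det\Sigma\le V^n$, providing a convenient cross-check on the base $32\pi V$.

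I expect the calibration of these constants to be the main obstacle. Everything upstream — the reduction to a $\chi^2_n$ tail and the determinant bound from $\Var(X_i)\le V$ — is routine, and the real work is the bookkeeping that forces the exponent $4$ on $t$ together with the exact base $32\pi V$ to emerge from a single choice of the exponential-bound parameter. Because a bare single-parameter tail estimate most naturally produces a smaller power of $t$, the delicate point is to trade the slack in the determinant bound (and to restrict attention to the range of $t$ over which the estimate is subsequently invoked via Lemma~\ref{lemma:decomposition} and the analysis of $t^{*}$) so that the advertised exponent and base appear simultaneously and the inequality holds uniformly on that range.
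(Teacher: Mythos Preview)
Your plan is exactly the paper's: rewrite $\{p(\xv)<t\}$ as a chi-squared tail, bound $\det\Sigma\le V^n$, and apply a Chernoff bound; the only cosmetic difference is that you invoke Hadamard's inequality for the determinant while the paper uses the trace constraint $\sum_i\lambda_i=\tr(\Sigma)\le nV$ together with AM--GM (phrased as Lagrange multipliers), both yielding the same $|\Sigma|\le V^n$.

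Your worry about forcing the exponent $4$ on $t$ is unfounded: the paper's own computation in fact produces $t^{1/4}$, not $t^4$. Concretely, the paper linearizes the chi-squared Chernoff exponent via the tangent bound $\theta-\log(1+\theta)\ge(\theta+1)/2-\log 2$ to get $\P[g(\xv)\ge s]\le\exp\bigl(-\tfrac14(s-2n\log 2)\bigr)$, and substituting $s=\log(1/t)-\tfrac{n}{2}\log(2\pi V)$ gives $t^{1/4}\exp\bigl(\tfrac{n}{8}\log(32\pi V)\bigr)$; the ``$t^4$'' in the displayed statement is a typo. So there is no delicate trading of slack to perform --- just carry the Chernoff bound through as you outlined and you will match the paper's derivation (with the corrected power of $t$). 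The downstream use in Lemma~\ref{lemma:tstar} only needs $\log(1/t^*)=O(n)$, which is unaffected by whether the exponent is $4$ or $1/4$.
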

\begin{proof}
The pdf is expressed $p(\xv)= \left( (2\pi)^n |\Sigma| \right)^{-1/2} \exp(-g(\xv))$ where $g(\xv) = (\xv-\mu)^T \Sigma^{-1} (\xv-\mu)$. Hence
$$
\P[p(\xv)<t] = \P[g(\xv) > \log(1/t) - \frac{1}{2}\log(| \Sigma|) - \frac{n}{2}\log(2\pi)].
$$
The eigenvalues of $\Sigma$ are positive and bounded as $\sum_{i=1}^n \lambda_i = \mbox{trace}(\Sigma)   \leq nV$. Maximizing $|\Sigma| = \Pi_i \lambda_i$ under this constraint, via Lagrange multipliers, yields $| \Sigma| \leq V^n$. Thus
$$
\P[p(\xv)<t] \leq \P[g(\xv) > \log(1/t) - \frac{n}{2}\log(2\pi V)].
$$
For any $\xv\sim \N(\mvec{\mu},\Sigma)$, $g(\xv)$ is a standard chi-squared variate with $n$ degrees of freedom. The Chernoff bound for such a variable can be expressed as
$$
\P\left[ g(\xv) \geq (1+\theta)n\right] \leq \exp\left( -\frac{n}{2}(\theta-\log(1+\theta))\right).
$$
Using a tangent bound to a convex function at $\theta=1$, we have $\theta-\log(1+\theta) \geq (\theta+1)/2 - \log(2)$, so that
$$
\P\left[ g(\xv) \geq (1+\theta)n\right] \leq \exp\left( -\frac{n}{4}(\theta+1-2\log(2))\right).
$$
and
$$
\P\left[ g(\xv) \geq s\right] \leq \exp\left( -\frac{1}{4}(s-2n\log(2))\right).
$$
Putting the bounds together, yields
$$
\P\left[p(\xv)<t\right] \leq \exp\left( -\frac{1}{4}\left( \log(1/t) - \frac{n}{2}\log(2\pi V)-\frac{n}{2}\log(16)\right)\right)
\leq t^4 \exp\left(   \frac{n}{8}\log(32\pi V)\right)
$$
as desired.
\end{proof}

We can now extend the analysis to a GMM.
\begin{lemma}\label{lemma:tstar}\label{lemma11}
Let $p(\xv) = \sum_{j=1}^J \alpha_j p_j(\xv)$ be the pdf of an $n$-dimensional GMM, i.e. $\sum_j \alpha_j = 1$ and each $p_j(\xv)$ a Gaussian distribution, and define $\xv$ to be a random variable with distribution $p$. 
Assume bounded variance $\int (x_i-\mu_{j,i})^2 p_j(\xv)\,d\xv \leq V$ for each element $i$ of each Gaussian distribution $p_j$.
Given $q>0$, choose $t^*>0$ such that
$$
\log(1/t^*) \geq \frac{n}{32}\log(32\pi V) + \frac{1}{4}\log(1/q) + \log(J).
$$
Then
$$
\P[p(\xv) < t^*] \leq q
$$
\end{lemma}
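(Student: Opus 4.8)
The plan is to reduce this mixture-level statement to the single-Gaussian tail estimate already proved in Lemma~\ref{lemma:chi-squared}. The natural device is to condition $\xv\sim p$ on which mixture component it was drawn from, control the event ``$p$ is small'' on each component separately, and then sum the $J$ resulting bounds, with $t^*$ calibrated so the total stays below $q$.

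Concretely, writing $p=\sum_{j=1}^{J}\alpha_j p_j$ and conditioning on the active component gives
\[
\P\bigl[p(\xv)<t^*\bigr]=\sum_{j=1}^{J}\alpha_j\,\P_{\xv\sim p_j}\!\bigl[p(\xv)<t^*\bigr].
\]
For each fixed $j$ I would then use the elementary pointwise inequality $p(\xv)\ge\alpha_j p_j(\xv)$ (all summands are nonnegative densities), which yields the inclusion $\{p(\xv)<t^*\}\subseteq\{p_j(\xv)<t^*/\alpha_j\}$ and hence
\[
\P_{\xv\sim p_j}\!\bigl[p(\xv)<t^*\bigr]\le\P_{\xv\sim p_j}\!\bigl[p_j(\xv)<t^*/\alpha_j\bigr].
\]
The right-hand side is exactly the quantity bounded by Lemma~\ref{lemma:chi-squared}, applied to the Gaussian $p_j$ (whose per-coordinate variances are $\le V$ by hypothesis) with threshold $t=t^*/\alpha_j$, giving $\P_{\xv\sim p_j}[p_j(\xv)<t^*/\alpha_j]\le (t^*/\alpha_j)^4\exp\!\bigl(\tfrac{n}{8}\log(32\pi V)\bigr)$.

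Substituting back into the conditioning identity produces a bound of the shape $\P[p(\xv)<t^*]\le (t^*)^4\exp\!\bigl(\tfrac n8\log(32\pi V)\bigr)\cdot\kappa(\alpha,J)$, where $\kappa$ aggregates the $J$ per-component contributions (a truncation of each term by $1$ where the estimate is vacuous keeps this well defined). The hypothesis $\log(1/t^*)\ge\tfrac{n}{32}\log(32\pi V)+\tfrac14\log(1/q)+\log J$ is precisely equivalent to $(t^*)^4\exp\!\bigl(\tfrac n8\log(32\pi V)\bigr)\,J^4\le q$, so it is calibrated exactly so that, after aggregating the $J$ per-component estimates (the only place the number of components enters, through the union over the $J$ component events), the desired inequality $\P[p(\xv)<t^*]\le q$ follows at once. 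I expect the main obstacle to be exactly this aggregation step: carefully tracking how the mixing weights $\alpha_j$ propagate through the $1/\alpha_j$ slack factors introduced by $p\ge\alpha_j p_j$, and confirming that combining the $J$ per-component bounds costs no more than the $\log J$ term budgeted into the choice of $t^*$. Everything else is a routine assembly of the conditioning identity with the chi-squared tail bound of Lemma~\ref{lemma:chi-squared}.
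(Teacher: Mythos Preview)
Your plan correctly isolates Lemma~\ref{lemma:chi-squared} as the engine and correctly conditions on the active component so that the single-Gaussian bound is applied under the right distribution. The gap is exactly where you anticipate it, and it is fatal rather than merely technical. After your steps one has
\[
\P\bigl[p(\xv)<t^*\bigr]\ \le\ (t^*)^4\exp\!\Bigl(\tfrac{n}{8}\log(32\pi V)\Bigr)\,\kappa(\alpha),
\qquad \kappa(\alpha)=\sum_{j=1}^{J}\alpha_j^{-3}.
\]
By convexity of $x\mapsto x^{-3}$ together with $\sum_j\alpha_j=1$, Jensen gives $\kappa(\alpha)\ge J\cdot(1/J)^{-3}=J^4$: that is, $J^4$ is the \emph{minimum} of $\kappa$, attained only at the uniform mixture, and $\kappa\to\infty$ as soon as any $\alpha_j\to 0$. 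Truncating each component probability at $1$ does not rescue the bound: the truncated sum $\sum_j\min(\alpha_j,\,C\alpha_j^{-3})$ (where $C=(t^*)^4\exp(\tfrac n8\log(32\pi V))$ satisfies $CJ^4\le q$ by hypothesis) is only guaranteed to be at most $JC^{1/4}\le q^{1/4}$, and two-component examples with $\alpha_2\approx C^{1/4}$ show that this order is actually attained, which can vastly exceed $q$.

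The paper sidesteps the aggregation entirely by not summing over components at all. It picks the single heaviest component $k=\argmax_j\alpha_j$, uses the pointwise bound $p(\xv)\ge\alpha_k p_k(\xv)\ge J^{-1}p_k(\xv)$ once to obtain $\{p<t^*\}\subseteq\{p_k<Jt^*\}$, and then invokes Lemma~\ref{lemma:chi-squared} at threshold $Jt^*$, yielding $(Jt^*)^4\exp(\tfrac n8\log(32\pi V))\le q$ directly from the stated hypothesis. The moral is that the slack $1/\alpha_j$ introduced by $p\ge\alpha_j p_j$ is controlled only when $\alpha_j\ge 1/J$, so the argument must be routed through the heaviest component alone rather than through all of them.
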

\begin{proof}
Choose $k = \arg \max_j \alpha_j$; we have $\alpha_k \geq 1/J$. 
Now applying Lemma~\ref{lemma:chi-squared} to $p_k$, we have
\begin{eqnarray*}
\P\left[p(\xv)<t^*\right] & \leq & \P\left[\alpha_k p_k(\xv)< t^*\right] \\
 & \leq & \P\left[p_k(\xv)< J t^*\right] \\
 & \leq & (Jt^*)^4 \exp\left( \frac{n}{8}\log(32\pi V) \right) 
\end{eqnarray*}
The assumed condition on $\log(1/t^*)$ then implies the result.
\end{proof}

\section{Proof of Theorem \ref{theor:main_two_layer}}\label{app:proof-thm-1}



We provide an explicit construction of a two-hidden layer subnetwork for approximating a given $c_j(\xv)$, and show that under Assumptions~\ref{assum:2nd_derivative}, \ref{assum:monotonic_sum}, and \ref{assum:exp_decay}, the  constructed network is accurate enough to satisfy the conditions of Lemma~\ref{lemma:decomposition}.
This is done by showing that super-nodes in the first layer approximate $x^2$ well enough, and a super-node in the second layer approximates $\exp(-x)$ well enough.


In the rest of the proof we drop the subscript $j$ and focus on a $c(\xv) = \beta \exp(-g(\xv))$, where $g(\xv) = \sum_i y_i^2$ with $\yv = \Sigma^{-1/2}(\xv-\mvec{\mu})$. Given parameters $\delta>0$ and $\lambda > 0$,  we  construct an approximation $\hat{c}$ of $c$, such that i) $|\hat{c}(\xv)-c(\xv)| < \delta c(\xv)$, whenever $c(\xv)\geq \lambda$, and ii)  $|\hat{c}(\xv)-c(\xv)| < \lambda (1+\delta)$, whenever $c(\xv) < \lambda$. We  show that the total  number of nodes used  by both hidden layers is $O(n)$.


To simplify the following steps, we normalize by $\beta$ to obtain $\tilde{c}(\xv) = c(\xv)/\beta$ which is to be approximated accurately above level $\tilde{\lambda} = \lambda/\beta$. If $\tilde{\lambda}\geq 1$, then it is sufficient to simply take the trivial approximation $\hat{\tilde{c}}(\xv) = 0$. Therefore in the following we assume $\tilde{\lambda} < 1$. 
With notation thus simplified, we seek to approximate $\tilde{c}(\xv) = \exp(-\sum_i y_i^2)$.

The condition $\tilde{c}(\xv) \geq \tilde{\lambda}$  corresponds to $\sum_i y_i^2 \leq \log(1/\tilde{\lambda})$. We will define $R = \sqrt{\log(1/\tilde{\lambda})}$, so that $\sum_i y_i^2 \leq R^2$  defines the region over which we must approximate $\tilde{c}(\xv)$ with small relative error.\\\\
\textit{Uniform approximation of $x^2$}. In the first hidden layer, we replace each node with activation function $x^2$ in the ideal reference model, with a supernode formed from two basic nodes with activation function $\sigma$ satisfying Assumptions~\ref{assum:2nd_derivative} and \ref{assum:monotonic_sum}. We use a special case of the construction in \cite{lin2017does}. In particular, we define the supernode as
\begin{align}
h(x,a) = \frac{a^{2}}{\sigma^{(2)}(\tau)}\left(\sigma(x/a+\tau) + \sigma(-x/a+\tau) - 2\sigma(\tau)\right).\label{eq:def-h}
\end{align}
The idea behind the construction is that the first term in the Taylor series of $h(x,a)$ at $x = 0$ is $x^2$, and so for sufficiently large $a$, the function approximates $x^2$ closely over the required range. Particularly, by Lemma \ref{lemma:first_layer}, choosing  $a\geq \max(2R/r,8MR^3/(3\nu \sigma^{(2)}(\tau)),4MR/(3\sigma^{(2)}(\tau)))$, we have 
\begin{itemize}
\item $h(x,a) \geq 0$ for all $x$, and
\item $|h(x,a)-x^2|<\nu$ for $|x|\leq 2R$, and
\item $h(x,a) \geq 4R^2-\nu$ for $|x|> 2R$.
\end{itemize}
Summing the outputs of $n$ supernodes, we obtain 
\begin{align}
\hat{g}(\xv) = \sum_{i=1}^n h(y_i,a).\label{eq:g-hat}
\end{align}
Then, by Corollary \ref{cor:first_layer_g},  $\hat{g}(\xv)$ is an approximation to $g(\xv)$, such that
\begin{itemize}
\item $|\hat{g}(\xv)-g(\xv)| \leq n\nu$, when $g(\xv) \leq 4R^2$,
\item $\hat{g}(\xv) \geq 4R^2-n\nu$, when $g(\xv) > 4R^2$.
\end{itemize}
\textit{Approximation of $\exp(-x)$}. After approximating the quadratic term $g(\xv)$ in the first layer, the role of the next level is to approximate $\exp(-x)$,  with a required level of accuracy $\epsilon$, over a required range $[0,T]$, using only $O(n)$ nodes.

For this construction, we rely on Assumption~\ref{assum:exp_decay}. The value of the bounding exponent $\eta$ in this assumption is not critical, since if $\sigma(x)$ satisfies the assumption with exponent $\eta$, the scaled function $\sigma(\alpha x)$ satisfies it with exponent $\alpha\eta$. To simplify notation, we take $\eta = 1$.

To form the super-node, we choose a sufficiently large number of components $K$, divide the range into intervals of width $\Delta = T/K$, and then form following sum of shifted activations:
\begin{align}
\psi(x,\sigma) = 1 + \sum_{k=1}^K \left(e^{-k\Delta}-e^{-(k-1)\Delta}\right)\sigma\left(\frac{x}{\Delta}-k\right).\label{eq:def-Psi}
\end{align}
This essentially constructs a staircase-like approximation to $\exp(-x)$.  Lemma  \ref{lemma:second_layer} states that if  $\Delta \leq\min( \log(1+\epsilon/40), 1/2)$, then
\begin{itemize}
\item $|\psi(x,\sigma)-\exp(-x)| \leq \epsilon \exp(-x)$, for $0\leq x \leq T$,
\item $0 \leq \psi(x,\sigma) \leq \exp(-T)(1+\epsilon)$, for $x \geq T$.
\end{itemize}

\textit{Accuracy of composed layers}. Consider $\hat{g}$ defined in \eqref{eq:g-hat},  with range parameter $R=\sqrt{\log(1/\tilde{\lambda})}$ and accuracy parameter $\nu = {1\over n} \log(1+\delta/4)$. Also, consider $\psi(x,\sigma)$ defined in \eqref{eq:def-Psi}, with range parameter $T = 4R^2$ and accuracy $\epsilon = \delta/2$. Define 
\begin{align}
\hat{\tilde{c}}(\xv) = \psi(\hat{g}(\xv),\sigma).
\end{align} By Lemma \ref{lemma:composed_accuracy}, 
\begin{itemize}
\item $|\hat{\tilde{c}}(\xv)-\tilde{c}(\xv)| < \delta \tilde{c}(\xv)$, whenever $\tilde{c}(\xv)\geq \tilde{\lambda}$,
\item $\hat{\tilde{c}}(\xv) < \tilde{\lambda} (1+\delta)$, whenever $\tilde{c}(\xv) < \tilde{\lambda}$.
\end{itemize}

We have thus established that, for arbitrarily small $\tilde{\lambda} > 0$, we can construct an approximation $\hat{\tilde{c}}(\xv)$ with the accuracy required for Lemma \ref{lemma:decomposition}. The same statement evidently holds for $\hat{c}(\xv) = \beta \hat{\tilde{c}}(\xv)$ with respect to $\lambda = \beta \tilde{\lambda}$.
Hence by Lemma \ref{lemma:decomposition}, we can construct a $(\delta,q)$ approximation $\hat{d}$ of $d$, for any $\delta$ and $q$.    It remains to show that that the number of nodes required, $M_n$, is $O(n)$.

\textit{Bounding the number of hidden nodes}. The final step is to show that the described network giving $\hat{c}(\xv)$ consists of $O(n)$ nodes. 

Exactly $2n$ nodes are needed in the first layer, since each of the $n$ functions $h_i(x,a)$ is constructed with two nodes.  In the second layer, the number of nodes is  $K = T/\Delta$. 
Since our network is designed with $\epsilon = \delta/2$, and since $\Delta \leq \min( \log(1+\epsilon/40), 1/2)$ is sufficient for Lemma \ref{lemma:second_layer}, the interval width $\Delta$ does not depend on $n$.
So, it remains to show that the range $T$ is $O(n)$.

At this step in the proof, we introduce notation related to the assumption that the eigenvalues of the covariances $\Sigma_j$ have constant upper and lower bounds. If the eigenvalues of $\Sigma_j$ are $\{\omega_i^j\}, i = 1,\ldots,n$, we require $\check{\omega} \leq \omega_i^j \leq \hat{\omega}$, where $0 < \check{\omega} \leq \hat{\omega}$ are the fixed bounds.  Intuitively, upper bounds relate to the typical case in practice that input distributions have bounded variance. The lower bound on eigenvalues prevents the model from approaching degenerate Gaussian distributions, or equivalently having arbitrarily sharp spatial detail.

Our network is designed with $T = 4R^2 = 4\log(1/\tilde{\lambda}) = 4\log(\beta) + 4\log(1/\lambda)$. Recall from Section~\ref{subsec:overview} that $\beta = \rho w (2\pi)^{-n/2} | \Sigma|^{-1/2}$ for fixed probabilities $\rho$ and $w$. From the assumed lower bound on eigenvalues of $\Sigma$, we have $|\Sigma|^{-1/2} \leq \left(1/\check{\omega}\right)^{n/2}$, and so $4\log(\beta)$ is $O(n)$. It remains to consider the scaling of $\log(1/\lambda)$ with $n$.

 For a given probability $q>0$, Lemma \ref{lemma:decomposition} requires that  $\lambda = t^* \delta/(2J(1+\delta)$,  where $t^*$ is such that $\P[d(X)<t^*] \leq q$, or equivalently, $\P[p(X)<  t^*/\rho ] \leq q$ where $\rho$ is the fixed prior probability. Since $J$ and $\delta$ are fixed, we need to show that $\log(1/t^*)$ is $O(n)$. The existence of such a $t^*$ is established by Lemma~\ref{lemma:tstar}. By assumption in Theorem~\ref{theor:main_two_layer}, the variance of each Gaussian distribution is upperbounded by $\hat{\omega}$. Hence 
 Lemma~\ref{lemma:tstar} shows that $t^*$ has the required scaling; specifically that as long as
 $$
\log(\rho/t^*) \geq \frac{n}{32}\log(32\pi \hat{\omega}) + \frac{1}{4}\log(1/q) + \log(J),
$$
then
$\P[p(X) < t^*/\rho] \leq q$. This concludes the proof of Theorem \ref{theor:main_two_layer}.


%

\section{Proof of Theorem \ref{thm:random-weights}}\label{appendix-last}
Consider random weights  $\wv_i\in\mathds{R}^n$, $i=1,\ldots,n_1$, that are mutually independent and distributed as $\Nc(\mvec{0}, I_n)$. 
By construction, given weights $\wv_1,\ldots,\wv_{n_1}$, we have
\begin{align}
f_{\wv}(\xv)&={\alpha^{n\over 4}\over n_1}\sum_{i=1}^{n_1} \cos\left({1\over \sqrt{{s}_f}}\langle \wv_i,\xv\rangle\right).
\end{align}
Here the subscript $\wv$ highlights the dependency of this function of the specific values of the weights.
For a fixed $\xv$, $\langle \wv_i,\xv\rangle$ is a zero-mean Gaussian random variable with variance $\|\xv\|^2$. Therefore, since $\wv_i$'s are i.i.d. themselves,  for a fixed $\xv$,  $\cos({1\over {s}_f}\langle \wv_i,\xv\rangle)$, $i=1,\ldots,n_1$, are i.i.d. bounded random variables.  Moreover,
\begin{align}
\E_{\wv}\left[ \cos\left({1\over \sqrt{{s}_f}}\langle \wv_i,\xv\rangle\right) \right]&={1\over \sqrt{2\pi}}\int \cos\left({\|\xv\|\over \sqrt{{s}_f}} u\right)\ex^{-{u^2\over 2}}du\nonumber\\
&\stackrel{\rm (a)}{=}\ex^{-{1\over 2{s}_f}\|\xv\|^2},
\end{align}
where $\rm (a)$ holds because of the following identity 
\[
\int \ex^{-a t^2} \cos(b t) dt = \sqrt{\pi \over a} \ex^{-{1\over 4a}b^2}.
\]
Therefore, for a fixed $\xv$,
\begin{align}
\E_{\wv}[f_{\wv}(\xv)]=\alpha^{n\over 4}\ex^{-{1\over 2{s}_f}\|\xv\|^2}=\mu_c(\xv).
\end{align}
The expected approximation error corresponding to  a fixed $\wv$ can be written as
\begin{align}
\E_{\xv}\left[(f_{\wv}(\xv)-\E_{\wv}[f_{\wv}(\xv)])^2\right].
\end{align}
But, for a fixed $\xv$,
\begin{align}
\E_{\wv}\left[(f_{\wv}(\xv)-\E_{\wv}[f_{\wv}(\xv)])^2\right]&={\alpha^{n\over 2}\over n_1} \var_{\wv}\left(\cos\left({1\over \sqrt{{s}_f}}\langle \wv_i,\xv\rangle\right) \right),\label{eq:Ew-var}
\end{align}
where
\begin{align}
\var_{\wv}\left(\cos\left({1\over \sqrt{{s}_f}}\langle \wv_i,\xv\rangle\right) \right)&={1\over \sqrt{2\pi}}\int \cos^2\left({\|\xv\|\over \sqrt{{s}_f}} u\right)\ex^{-{u^2\over 2}}du-\ex^{-{1\over {s}_f}\|\xv\|^2}\nonumber\\
&={1\over 2\sqrt{2\pi}}\int\Big(1+ \cos\left({2\|\xv\|\over \sqrt{{s}_f}} u\right)\Big)\ex^{-{u^2\over 2}}du-\ex^{-{1\over {s}_f}\|\xv\|^2}\nonumber\\
&={1\over 2} +{1\over 2}\ex^{-{2\over {s}_f}\|\xv\|^2}- \ex^{-{1\over {s}_f}\|\xv\|^2}.
\end{align}
Therefore, from \eqref{eq:Ew-var}, we have
\begin{align}
\E_{\wv}\left[(f_{\wv}(\xv)-\E_{\wv}[f_{\wv}(\xv)])^2\right]&={\alpha^{n\over 2}\over n_1} \left({1\over 2} +{1\over 2}\ex^{-{2\over {s}_f}\|\xv\|^2}- \ex^{-{1\over {s}_f}\|\xv\|^2}\right).\label{eq:Ew-var-2}
\end{align}
Taking the expected value of both sides with respect of $\xv$, and applying the Fubini's theorem (see e.g.~\cite{dibenedetto2002real}) to the left hand side, it follows that
\begin{align}
\E_{\wv}\left[\E_{\xv}[\left[(f_{\wv}(\xv)-\E_{\wv}[f_{\wv}(\xv)])^2\right]\right]&={\alpha^{n\over 2}\over 2n_1} +{\alpha^{n\over 2}\over 2n_1}\E_{\xv}[\ex^{-{2\over {s}_f}\|\xv\|^2}]- {\alpha^{n\over 2}\over n_1}\E_{\xv}[\ex^{-{1\over {s}_f}\|\xv\|^2}]\nonumber\\
&={\alpha^{n\over 2}\over 2n_1}\left(1+\Big({{s}_f\over {s}_f+4{s}_x}\Big)^{n\over 2}-2\Big({{s}_f\over {s}_f+2{s}_x}\Big)^{n\over 2}\right)\nonumber\\
&<{\alpha^{n\over 2}\over n_1}.
\end{align}
But by assumption $n_1>{1\over \epsilon}\alpha^{n\over 2}$, therefore,
\begin{align}
\E_{\wv}\left[\E_{\xv}[\left[(f_{\wv}(\xv)-\E_{\wv}[f_{\wv}(\xv)])^2\right]\right]< {\epsilon},
\end{align}
This shows that there exists at least one set of weights $\wv_1,\ldots,\wv_{n_1}$ which satisfies the desired error bound.\\\\

\bibliography{refs_list}
\end{document}